\theoremstyle{definition}
\newtheorem{definition}{Definition}
\pgfplotsset{compat=1.17}
\definecolor{lightgreen}{RGB}{242, 244, 232}
\definecolor{darkgreen}{RGB}{113, 156, 62}
\newmdenv[
  linecolor=darkgreen,
  backgroundcolor=lightgreen,
  leftline=true,
  rightline=false,
  topline=false,
  bottomline=false,
  linewidth=4pt,
  innerleftmargin=15pt,
  innerrightmargin=10pt,
  innertopmargin=10pt,
  innerbottommargin=10pt,
  skipabove=\baselineskip,
  skipbelow=\baselineskip
]{remarkbox}
\newenvironment{remark}{%
  \begin{remarkbox}
  \noindent\textbf{Remarks:}%
}{%
  \end{remarkbox}%
}
\begin{document}

\title{Graph Neural Diffusion via Generalized Opinion Dynamics}

\author{Asela Hevapathige}
\email{asela.hevapathige@anu.edu.au}
\affiliation{%
  \institution{School of Computing \\ Australian National University}
  \city{Canberra}
  \country{Australia}
}

\author{Asiri Wijesinghe}
\email{asiriwijesinghe.wijesinghe@data61.csiro.au}
\affiliation{%
  \institution{Data61 \\ CSIRO}
  \city{Canberra}
  \country{Australia}
}

\author{Ahad N. Zehmakan}
\email{ahadn.zehmakan@anu.edu.au}
\affiliation{%
  \institution{School of Computing \\ Australian National University}
  \city{Canberra}
  \country{Australia}
}


\begin{abstract}
There has been a growing interest in developing diffusion-based Graph Neural Networks (GNNs), building on the connections between message passing mechanisms in GNNs and physical diffusion processes. However, existing methods suffer from three critical limitations: (1) they rely on homogeneous diffusion with static dynamics, limiting adaptability to diverse graph structures; (2) their depth is constrained by computational overhead and diminishing interpretability; and (3) theoretical understanding of their convergence behavior remains limited. To address these challenges, we propose \emph{GODNF}, a Generalized Opinion Dynamics Neural Framework, which unifies multiple opinion dynamics models into a principled, trainable diffusion mechanism.  Our framework captures heterogeneous diffusion patterns and temporal dynamics via node-specific behavior modeling and dynamic neighborhood influence, while ensuring efficient and interpretable message propagation even at deep layers. We provide a rigorous theoretical analysis demonstrating GODNF’s ability to model diverse convergence configurations. Extensive empirical evaluations of node classification and influence estimation tasks confirm GODNF's superiority over state-of-the-art GNNs. 
\end{abstract} 

\begin{CCSXML}
<ccs2012>
 <concept>
  <concept_id>00000000.0000000.0000000</concept_id>
  <concept_desc>Do Not Use This Code, Generate the Correct Terms for Your Paper</concept_desc>
  <concept_significance>500</concept_significance>
 </concept>
 <concept>
  <concept_id>00000000.00000000.00000000</concept_id>
  <concept_desc>Do Not Use This Code, Generate the Correct Terms for Your Paper</concept_desc>
  <concept_significance>300</concept_significance>
 </concept>
 <concept>
  <concept_id>00000000.00000000.00000000</concept_id>
  <concept_desc>Do Not Use This Code, Generate the Correct Terms for Your Paper</concept_desc>
  <concept_significance>100</concept_significance>
 </concept>
 <concept>
  <concept_id>00000000.00000000.00000000</concept_id>
  <concept_desc>Do Not Use This Code, Generate the Correct Terms for Your Paper</concept_desc>
  <concept_significance>100</concept_significance>
 </concept>
</ccs2012>
\end{CCSXML}

\ccsdesc[500]{Computing methodologies~Machine learning}

\keywords{Graph Neural Networks, Neural Diffusion, Opinion Dynamics, Node Classification, Influence Estimation}

\maketitle

\section{Introduction}

Graph Neural Networks (GNNs) have become a principal technique for graph learning tasks due to their ability to integrate node features and structural information into meaningful representations. They have been successfully applied to a broad
range of problems in various domains, including social network analysis \cite{li2023survey,jain2023opinion,lyu2023dcgnn}, recommendation systems \cite{wu2022graph,gao2022graph, cai2023expressive}, molecular property prediction \cite{bodnar2021weisfeiler,wieder2020compact,wu2023chemistry}, and fraud detection \cite{kim2022graph,li2023lgm,hyun2024lex}. Many GNNs are built on a message passing mechanism, where nodes iteratively exchange information with neighbors to refine their representations, allowing them to model the relational structure of graphs effectively \cite{wu2020comprehensive}.

Recently, researchers have increasingly focused on formulating the message passing process in GNNs as a diffusion phenomenon to better understand and improve the propagation of information in graph data \cite{gasteiger2019diffusion,zhao2021adaptive,chamberlain2021grand}. These models are inspired by physical diffusion processes \cite{hancontinuous}, offering distinct advantages over traditional GNNs as they explicitly model the dynamics of information flow across networks. Although traditional GNNs rely mainly on aggregation operations in the local neighborhood \cite{kipf2017semi,velivckovic2018graph}, diffusion-based GNNs leverage principles from random walks, heat diffusion, and other physical processes, enabling them to capture local and global structural dependencies in graphs in a more effective manner \cite{lee2023time,gasteiger2019diffusion}. 


Despite their strengths, diffusion GNNs face several notable limitations:
\begin{itemize}
    \item \textbf{Homogeneous Diffusion with Static Dynamics:} Traditional diffusion GNNs assume uniform information propagation across all nodes and fixed dynamics over time \cite{chamberlain2021grand,gasteiger2019diffusion}. This prevents them from capturing node-specific behaviors and temporal variations in information flow. Therefore, these models struggle to adapt to dynamic, node-dependent diffusion patterns, limiting their effectiveness in complex real-world tasks.
    
    \item \textbf{Computational Complexity and Interpretability Issues:} Deeper GNNs are often employed to enhance expressiveness, but this comes at the cost of increased model parameters, particularly due to unshared weights, which significantly raises computational overhead \cite{li2021training,liu2020towards}. Additionally, deeper architectures obscure the flow of information, making it more difficult to interpret how predictions are formed.
    
    \item \textbf{Limited Understanding of Convergence Properties:}  While existing diffusion-based GNNs offer some insights into basic convergence dynamics, a rigorous theoretical characterization of their representational capacity remains lacking. In particular, the boundaries of the convergence configurations these models can capture are not well defined. Furthermore, the role of diffusion parameters in shaping convergence and representational behavior has not been systematically analyzed, hindering the adaptability of these models to diverse and complex tasks.
\end{itemize}

To address the aforementioned limitations, we propose incorporating principles from opinion dynamics into diffusion GNNs. Opinion dynamic models, such as the Friedkin-Johnsen~\cite{friedkin1990social}, are robust mathematical frameworks that are designed to capture how opinions propagate within a group of agents (i.e., nodes) in a social network \cite{das2014modeling,hassani2022classical}. These models trace dynamic behaviors of the underlying propagation, depicting how individual opinions evolve over time and eventually converge. We observe that these models exhibit unique characteristics that can significantly strengthen diffusion GNNs:

\begin{itemize}
    \item \textbf{Heterogeneous Diffusion with Temporal Evolution:} Opinion dynamics usually model node-specific (i.e., agent-specific) behaviors through parameters such as stubbornness (i.e., attachment to initial opinion) and capture temporal evolution through dynamic influence modeling between neighbors, enabling more complex propagations.

    \item \textbf{Efficiency and Interpretability:} Opinion dynamics provide interpretable, sequential information flow that can introduce a strong inductive bias. Unlike traditional GNNs that rely on heavy learnable transformations at each step, opinion dynamics achieve effective propagation through principled update rules with minimal parameterization, remaining simple and coherent even as model depth increases.

    \item \textbf{Theoretically Established Convergence Properties:} Opinion dynamics builds upon well-established mathematical foundations with provable convergence configurations \cite{degroot1974reaching,friedkin1990social,rainer2002opinion}, offering stronger theoretical insights into information propagation under diverse conditions. They also provide proper guidance on the impact of their propagation parameters, enabling adaptive parameter tuning based on task requirements.
\end{itemize}
 
However, integrating opinion dynamics into GNNs is a challenging task since it requires bridging several technical and conceptual gaps between computational social science and GNNs. This integration must effectively accommodate temporal modeling without incurring excessive computational costs and preserve convergence properties within the context of neural network training. Additionally, incorporating heterogeneous influence mechanisms requires adjustments to traditional message-passing schemes, which introduces new interpretability trade-offs that need careful consideration. Since different opinion dynamics models have distinct advantages, a unified framework is necessary to effectively integrate these mechanisms and fully leverage their potential to enhance GNN performance across various graph learning tasks. To address these challenges, we propose a novel \underline{G}eneralized \underline{O}pinion \underline{D}ynamics \underline{N}eural \underline{F}ramework, \emph{GODNF}, which is a more robust and theoretically grounded alternative to conventional diffusion GNNs. 

\paragraph{\textbf{Contributions}} Our main contributions are summarized as follows. (1) We introduce GODNF, a generalized framework grounded in multiple opinion dynamic models, which provides a principled foundation for designing diffusion-based GNNs and enables the efficient modeling of diverse information propagation behaviors.  (2) We advance diffusion GNNs by designing a message passing framework that captures evolving temporal dynamics with node-specific responses. (3) We show that our framework can accommodate deep layers while maintaining interpretability and computational efficiency. (4) We provide theoretical insights by proving that our framework guarantees convergence and can exhibit diverse convergence configurations, offering greater flexibility in modeling various diffusion behaviors. (5) Our extensive experiments on both static and temporal prediction tasks demonstrate that GODNF outperforms leading spatial and diffusion-based GNN models.

\section{Related Work}

\paragraph{\textbf{Diffusion GNNs}}  Diffusion  GNNs are inspired by dynamical systems, where the propagation of information through the network layers is analogous to the time evolution of a dynamic process \cite{hancontinuous}. These GNNs are based on two primary diffusion paradigms: discrete and continuous. Discrete diffusion GNNs propagate information through iterative steps with explicit layer updates \cite{gasteiger2018predict,gasteiger2019diffusion,chienadaptive,zhao2021adaptive}. They are often computationally efficient and straightforward to implement, but struggle to capture fine-grained diffusion dynamics, as the use of fixed, uniform update rules limits their ability to model heterogeneous node behaviors and evolving temporal patterns. In contrast, continuous models frame diffusion as dynamical systems governed by differential equations \cite{chamberlain2021grand,chamberlain2021beltrami,choi2023gread,eliasof2024feature}. The adaptive nature of continuous diffusion modeling allows these GNNs to capture intricate diffusion patterns more effectively than their discrete counterparts.
Early continuous GNNs, such as GRAND \cite{chamberlain2021grand}, closely mimic heat diffusion but are prone to issues like oversmoothing and fail to capture long-range interactions in graphs effectively \cite{thorpegrand++}. Recent continuous diffusion GNNs have advanced beyond this by employing techniques such as additional damping/source terms \cite{thorpegrand++,choi2023gread,rusch2022graph,eliasof2024feature}, external forces \cite{wang2022acmp,zhao2023graph}, and geometric tools \cite{bodnar2022neural,chen2025graph} to mitigate these problems. Nonetheless, these GNNs require passing node features into differential equation solvers, which introduces high computational overhead and makes it difficult for them to scale to deep layers due to numerical stability issues.

Our framework combines the strengths of both discrete and continuous diffusion GNNs. Unlike standard discrete diffusion GNNs, which rely on uniform update rules, our opinion dynamic-based update rule captures intricate diffusion patterns through dynamic influence modeling and node-specific behavior adaptation, all while maintaining minimal learnable parameters. This provides our model with the adaptability of continuous models without their computational overhead and numerical stability issues. Furthermore, our method builds on well-established mathematical foundations with provable convergence properties, offering better real-world adaptability and interpretability compared to existing diffusion GNNs. As a unified framework, it captures a broader range of diffusion dynamics, resulting enhanced expressive power.

\paragraph{\textbf{Opinion Dynamics and GNNs}}

Concepts from opinion dynamic models have influenced the design of GNN architectures. GNNs with residual connections \cite{gasteiger2018predict,xu2018representation,liu2021graph,zhang2023drgcn} exhibit similarities to the Friedkin–Johnsen model \cite{friedkin1990social}, where the residual connections to the initial feature states mimic the concept of stubbornness in the model. Sheaf theory-based GNNs \cite{bodnar2022neural} adopt an opinion dynamics-inspired approach, where each node has both public and private opinions, and information propagation is modeled as the interplay between these two types of opinions. Zhou et al. \cite{zhouodnet}, and Vargas-P{\'e}rez et al. \cite{vargas2024unveiling}  introduced message passing mechanisms based on the bounded confidence opinion model \cite{rainer2002opinion}, designed to enhance information propagation in GNNs. Wang et al. \cite{wang2025resolving} proposed a message-passing approach inspired by behavior opinion dynamics \cite{leonard2024fast} to address the oversmoothing issue in GNNs. Compared to these works, our method provides a single unified framework that encapsulates diverse opinion dynamics behaviours. Recently, Li et al. \cite{li2025unigo} proposed UniGO, a GNN framework for temporal opinion evolution prediction. While UniGO offers a unified formulation of opinion dynamics, it employs this formulation solely to synthesise training datasets. The actual GNN model instead relies on a separate coarsen-refine mechanism with graph pooling. In contrast, our framework integrates unified opinion dynamics directly into the neural diffusion mechanism, yielding a principled mathematical foundation for message passing. This integration not only provides theoretical convergence guarantees but also ensures broader applicability beyond domain-specific prediction tasks.


\section{Background}

Let $ G = (V, E) $ be an undirected graph, where $ V $ denotes the set of nodes, $ E $ the set of edges, $|V| = n$, and $|E| = m$. The nodes are represented by the feature matrix $ H \in \mathbb{R}^{n \times f} $, where $f$ denotes the number of features per node. Further, $ N(i) $ represents the set of nodes adjacent to node $i$.

\subsection{Diffusion Graph Neural Networks}

Diffusion GNNs extend message-passing by modeling information propagation as a diffusion process. Unlike local aggregation, they incorporate global structure through operators like the graph Laplacian or transition matrices \cite{gasteiger2019diffusion}.

A diffusion GNN is described by the evolution of node features over time through a temporal operator $\mathcal{D}$:

\[
\mathcal{D}[{X}(t)] = \Phi \bigg({X}(0), {X}(t), G; \Theta \bigg)
\]

where ${X}(t) \in \mathbb{R}^{n \times d}$ denotes the node features at time $ t $, with $ d $ being the feature dimensionality. The initial node features in ${X}(0)$ are typically derived from the raw node features in ${H}$ through a learnable transformation. $\Phi$ is a learnable diffusion operator parameterized by $\Theta$, which governs how information proagates across $G$.

\textbf{Discrete-Time Diffusion.}
In the discrete case, the operator takes the form $ \mathcal{D}[\mathbf{X}(t)] = \mathbf{X}(t+1) - \mathbf{X}(t) $, yielding the iterative update rule:
\[
{X}(t+1) = {X}(t) + \Phi \bigg({X}(0), {X}(t), G; \Theta \bigg)
\]

This formulation includes architectures such as APPNP \cite{gasteiger2018predict} and GDC \cite{gasteiger2019diffusion}.

\textbf{Continuous-Time Diffusion.}
Alternatively, in continuous-time models, the evolution is governed by an ordinary differential equation (ODE):
\[
\frac{d{X}(t)}{dt} = \Phi\bigg({X}(0), {X}(t), G; \Theta\bigg)
\]
Notable examples include GRAND \cite{chamberlain2021grand} and PDE-GCN \cite{eliasof2021pde}.

\subsection{Opinion Dynamics Models}

Opinion dynamics models describe how the states or opinions of nodes in a network evolve through local interactions, capturing different mechanisms of information propagation.

\textbf{ French-DeGroot (FD) Model \cite{degroot1974reaching}.}  
In the FD model, each node updates its opinion as a weighted average of its neighbors’ opinions:
\[
x_i(t+1) = \sum_{j=1}^{n} W_{ij} x_j(t)
\]
where $ x_i(t) $ denotes the opinion of node $ i $ at time $t$. The weight matrix $ W \in \mathbb{R}^{n \times n} $ is row-stochastic, with $ W_{ij} > 0 $ if node $i$ is influenced by node $j$. When the graph is connected, the opinions of all nodes converge to the same value (i.e., single consensus) \cite{degroot1974reaching}.

\textbf{Friedkin-Johnsen (FJ) Model \cite{friedkin1990social}.}  
The FJ model extends FD by allowing each node to retain partial attachment to its initial opinion:
\[
x_i(t+1) = \lambda_i x_i(0) + (1 - \lambda_i) \sum_{j=1}^{n} W_{ij} 
 x_j(t)
\]
 where  $x_i(0)$ is the initial opinion of node $i$, and $ \lambda_i \in [0,1] $ representing node $i$'s level of attachement to its initial opinion. This opinion dynamics converges to an equilibrium state that reflects a balance between the influence of neighboring nodes and each node’s adherence to its initial opinion, resulting in persistent diversity of opinions across the network \cite{biondi2023dynamics,xu2022effects}.

\textbf{Hegselmann–Krause (HK) Model \cite{rainer2002opinion}.}  
In the HK model, each node updates its opinion by averaging those of other nodes whose opinions lie within a specified confidence interval:
\begin{align*}
x_i(t+1) &= \frac{1}{|\Gamma(i,t)|} \sum_{j \in \Gamma(i,t)} x_j(t), \\
\Gamma(i,t) &= \{ j \in N(i) \mid |x_i(t) - x_j(t)| \leq \epsilon \}
\end{align*}

where $ \epsilon > 0 $ is the confidence bound. Unlike the FD and FJ models, the HK model depicts time-varying neighborhood influence as each node's influential neighbors change dynamically based on opinion proximity at each time step.  This opinion dynamics typically leads to multiple opinion clusters instead of a global consensus \cite{chen2017opinion}.

\begin{figure*}[]
  \centering  \includegraphics[width=0.9\textwidth, height=0.23\textheight]{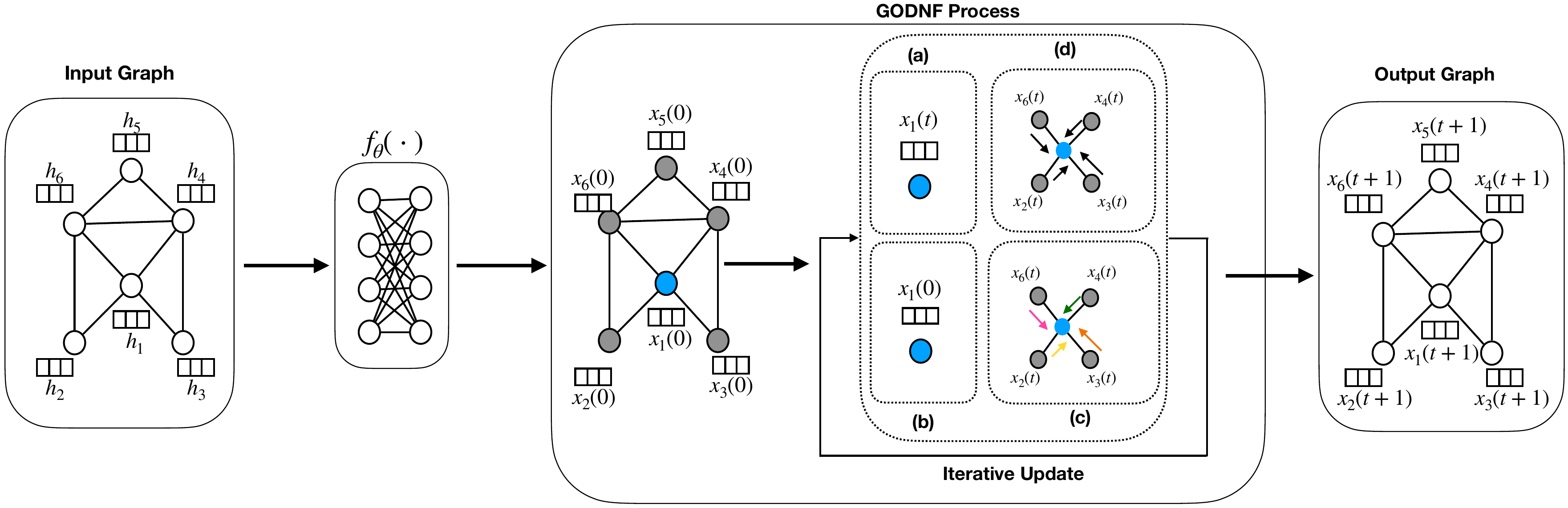}
  \caption{High-Level Architecture of GODNF: Raw features are first transformed to retrieve initial features. These initial features are iteratively updated based on four aspects: (a) Current feature retention, (b) Initial feature attachment, (c) Neighborhood influence, and (d) Structural regularization. The model is trained in an end-to-end manner.}
  \label{fig:model summary}
\end{figure*}


\section{A Generalized Opinion Dynamics Framework for Graph Neural Diffusion}

In this section, we introduce a generalized discrete-time framework that leverages opinion dynamics for diffusion GNNs. We demonstrate how node features in diffusion GNNs evolve similarly to opinions in social networks, facilitating the transfer of insights across domains to develop a more powerful learning method for capturing complex diffusion patterns in graph-structured data. 

\subsection{ Diffusion Mechanism in GODNF}

Let $h_i \in H$ denote the feature vector of node $i$, and \( f_{\theta} \) be a neural network parameterized by \( \theta \). The update rule of GODNF is defined as follows: 
\[
x_i(0) = f_{\theta}(h_i),
\] 
\begin{equation}
\begin{aligned}
x_i(t+1) &= \alpha \underbrace{x_i(t)}_{\text{(a)}} + (1-\alpha)\bigg[ \lambda_i \underbrace{x_i(0)}_{\text{(b)}} + (1 - \lambda_i)\Big( \underbrace{\sum_{j \in \mathcal{N}(i)} w_{ij}(t) x_j(t)}_{\text{(c)}} \\
&\quad - \mu \underbrace{L_g x_i(t)}_{\text{(d)}} \Big) \bigg]
\end{aligned}
\label{eq:update_rule}
\end{equation}


The update dynamics for a node’s feature vector combine current feature retention, initial feature attachment, neighborhood influence, and structural regularization:

\begin{enumerate}[label=(\alph*)]
    \item \textbf{Currrent Feature Retention}: \( x_i(t) \) represents the node’s current feature vector. Controlled by a retention parameter $\alpha$ $(0 \leq \alpha < 1)$, this term preserves the node's recent state. 
    \item \textbf{Initial Feature Attachment}: \( x_i(0) \) represents the node’s initial feature after neural transformation, and the learnable parameter $\lambda_i$ represents the node's stubbornness. A higher $\lambda_i$ means node $i$ is more attached to its initial feature, showing greater resistance to external influence.
    
    \item \textbf{Neighborhood Influence}: This term aggregates features from node $i$'s neighbors. Each neighbor feature $x_j(t)$ is weighted by a time-dependent learnable parameter $w_{ij}(t)$ where $\sum_{j \in \mathcal{N}(i)}w_{ij}(t)=1$. These adaptive weights allow the model to adjust the influence of each neighbor on node $i$ over time, thereby capturing temporal evolutions.
    
    \item \textbf{Structural Regularization}: This term promotes feature smoothness across the nodes using a Laplacian operator $L_g$. The learnable coefficient \( \mu \in \mathbb{R}_{\ge 0} \) controls the regularization strength, encouraging similar features among connected nodes. 
\end{enumerate}

Together, these terms allow GODNF to dynamically balance individual stubbornness with network-level adaptation, enabling nuanced diffusion dynamics over time.

\paragraph{\textbf{Matrix Form}}
Let \( X(t) \in \mathbb{R}^{n \times d} \) denote the feature matrix at time \( t \), with \( X(0) \) as the initial feature matrix after neural transformation. Stubbornness is modeled by a diagonal matrix \( \Lambda = \operatorname{diag}(\lambda_1, \dots, \lambda_n) \). Neighbor influence is captured by the time-dependent row-stochastic influence matrix \( W(t) \in \mathbb{R}^{n \times n} \), where \( W_{ij}(t) \) denotes the influence of node \( j \) on node \( i \) at time step $t$. Structural regularization is enforced via the normalized Laplacian operator \( L_g \in \mathbb{R}^{n \times n} \).
With these notations, the update rule can be written in matrix form as:
\[
 X(0) = f_{\theta}(H)
\]
\[
X(t+1) = \alpha X(t) + (1-\alpha) \left[ \Lambda X(0) + (I - \Lambda) \left( W(t) - \mu L_g \right) X(t) \right].
\]

\paragraph{\textbf{A Simplified Variant: GODNF$_{\text{Static}}$}}

We also introduce a simplified variant of our model where the influence matrix \( W(t) \) is constant over time (\( W(t) = W \) for all \( t \)), meaning the influence between nodes remains fixed throughout propagation. The update rule then simplifies to:
\[
X(t+1) = \alpha X(t) + (1-\alpha) \left[ \Lambda X(0) + (I - \Lambda) \left( W - \mu L_g \right)X(t) \right].
\]

For brevity, we refer to our original model with the time-dependent influence matrix as \textbf{GODNF$_{\text{Dynamic}}$}. A detailed mapping of GODNF components to existing opinion dynamic models is provided in the Appendix \ref{sec:mapping}.

\subsection{Bounded Weight Evolution}

We bound the time-dependent evolution of influence weights in the GODNF$_{\text{Dynamic}}$ model to prevent oscillation of $W(t)$, thereby ensuring stability in model training. To achieve this, we employ a decaying update rule:

\[
W(t+1) = W(t) + \eta(t) \Delta W(t)
\]

where \( \eta(t) \) is a time-dependent learning rate that decreases over time with $\lim_{t \to \infty} \eta(t) = 0$. 
$\Delta W(t)$ is the weight adjustment computed from the gradient of the objective function w.r.t. $W(t)$. This ensures progressively smaller updates and eventual stabilization of \( W(t) \) to a fixed matrix $W^*$ as \( t \) increases. 


\begin{remark}
 $\lim_{t \to \infty} \eta(t) = 0$ implies the convergence of  $W(t)$ into $W^*$ under following conditions: (1) $ ||\Delta W(t)|| \leq k $ for all $t$ and some constant $k > 0$; (2) $\sum_{t=0}^{\infty} \eta(t) = \infty$ and $\sum_{t=0}^{\infty} \eta(t)^2 < \infty$ \cite{robbins1951stochastic}. Condition (1) is naturally satisfied since $\Delta W(t)$ is computed on gradient weights under a well-behaved training loss function. To satisfy condition (2), we select a simple and efficient function $\eta(t) = \frac{1}{1+t}$.
\end{remark}





\subsection{Convergence Conditions of GODNF}

We present the following theorem regarding the convergence of GODNF. It demonstrates that, under given conditions, the node features will converge to a unique fixed point. The proof is provided in the Appendix.


\begin{restatable}[Convergence of GODNF Feature Updates]{theorem}{convergence} 

Let \( X(t) \in \mathbb{R}^{n \times d} \) evolve according to the update rule in Eq.~\eqref{eq:update_rule}, and define the combined influence matrix \( M(t) = (I - \Lambda)(W(t) - \mu L_g) \). When the parameters \( \Lambda \) and \( \mu \) are chosen such that the operator norm satisfies \( \|M(t)\|_{\mathrm{op}} < 1 \) for all \( t \), and \( W(t) \to W^* \), the \( X(t) \) is guaranteed to converge to a unique fixed point \( X^* \) as \( t \to \infty \).
\label{thm:convergence}
\end{restatable}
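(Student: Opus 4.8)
The plan is to recast the update rule as a non-autonomous affine recursion and then treat convergence as a perturbation of a limiting contraction map. First I would absorb the retention term into the linear part: writing $A(t) = \alpha I + (1-\alpha)M(t)$ and $b = (1-\alpha)\Lambda X(0)$, the matrix form of Eq.~\eqref{eq:update_rule} becomes the compact expression
\[
X(t+1) = A(t)\,X(t) + b,
\]
where $b$ is constant because $X(0)$ is fixed after the neural transformation. Sub-multiplicativity of the operator norm gives $\|A(t)\|_{\mathrm{op}} \le \alpha + (1-\alpha)\|M(t)\|_{\mathrm{op}} < \alpha + (1-\alpha) = 1$, so each individual $A(t)$ is a strict contraction, and the retention parameter $\alpha$ only contracts the spectrum further.

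Next I would identify the candidate limit. Since $W(t)\to W^*$ we have $M(t)\to M^* := (I-\Lambda)(W^*-\mu L_g)$, so I would set $A^* = \alpha I + (1-\alpha)M^*$. The fixed point $X^*$ must satisfy $X^* = A^* X^* + b$, which rearranges to $(I-M^*)X^* = \Lambda X(0)$. Because $\|M^*\|_{\mathrm{op}}<1$, the operator $I-M^*$ is invertible via the convergent Neumann series $\sum_{k\ge 0}(M^*)^k$, so $X^* = (I-M^*)^{-1}\Lambda X(0)$ exists and is unique. This step simultaneously pins down the fixed point and establishes its uniqueness.

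The core of the argument is the error analysis. Setting $E(t) = X(t)-X^*$ and subtracting the fixed-point identity, I would derive
\[
E(t+1) = A(t)\,E(t) + \bigl(A(t)-A^*\bigr)X^*,
\]
whose inhomogeneous term equals $(1-\alpha)(M(t)-M^*)X^*$ and therefore tends to $0$. Taking norms and using a uniform contraction factor $\gamma := \alpha + (1-\alpha)\sup_t\|M(t)\|_{\mathrm{op}}$ yields the scalar recursion $\|E(t+1)\| \le \gamma\|E(t)\| + \delta(t)$ with $\delta(t)\to 0$. Unrolling gives $\|E(t)\| \le \gamma^{t}\|E(0)\| + \sum_{k=0}^{t-1}\gamma^{\,t-1-k}\delta(k)$, where the first term decays geometrically and the second is a convolution of a summable geometric kernel with a null sequence, which also vanishes; hence $X(t)\to X^*$.

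The main obstacle I anticipate is justifying the uniform contraction constant $\gamma<1$. The stated hypothesis only provides $\|M(t)\|_{\mathrm{op}}<1$ pointwise, and continuity of the norm under $M(t)\to M^*$ gives merely $\|M^*\|_{\mathrm{op}}\le 1$, which does not by itself exclude $\sup_t\|M(t)\|_{\mathrm{op}}=1$ and the associated failure of geometric decay. The cleanest resolution is to read the condition as a uniform bound $\rho := \sup_t\|M(t)\|_{\mathrm{op}} < 1$ (the natural intent of the phrase ``chosen such that''), under which $\gamma = \alpha + (1-\alpha)\rho < 1$ immediately; alternatively, if $\|M^*\|_{\mathrm{op}}<1$ can be established separately, then convergence of the tail together with the finiteness of the initial segment yields the same uniform bound. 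A secondary technical point is the geometric-convolution step for $\delta(t)$, which I would handle by the standard splitting of the sum into an early block dominated by $\gamma^{t}$ and a late block dominated by the smallness of $\delta$.
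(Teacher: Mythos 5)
Your proposal is correct and follows essentially the same route as the paper's proof: the same decomposition $A(t) = \alpha I + (1-\alpha)M(t)$, the same error recursion $E(t+1) = A(t)E(t) + (A(t)-A^*)X^*$, and the same contraction estimate. Your treatment is in fact slightly more careful than the paper's, which silently asserts $\sup_t\|A(t)\|_{\mathrm{op}} < 1$ --- precisely the uniform-bound issue you flag --- and which appeals to the Banach fixed point theorem for a time-varying map where your explicit unrolling of the perturbed geometric recursion is the cleaner justification.
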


\begin{remark}
The condition $\|M(t)\|_{\mathrm{op}} < 1$ is realistic and not overly restrictive as $W(t)$ (row-stochastic with eigenvalues in $[-1,1]$) and $L_g$ (normalized Laplacian with eigenvalues in $[0,2]$) can be appropriately scaled by learnable parameters $\Lambda$ and $\mu$ to satisfy this bound regardless of graph structure.\end{remark}

\subsection{Convergence-Preserving Regularization}

According to Theorem~\ref{thm:convergence}, convergence of GODNF requires \(\|M(t)\|_{\mathrm{op}} \) < 1 at every time step. Therefore, we design a regularization term enforcing this condition. However, computing \( \|M(t)\|_{\mathrm{op}} \) can be expensive in practice. To address this, we employ an efficient approximation based on a well-established matrix norm inequality. The operator norm of a matrix is bounded above by the geometric mean of its 1-norm and infinity-norm \cite{horn2012matrix},
\[
\|M(t)\|_{\mathrm{op}} \leq \sqrt{\|M(t)\|_1 \cdot \|M(t)\|_\infty},
\]
 Unlike the $\mathrm{op}$-norm, the infinity-norm and 1-norm can be computed in linear time.

\paragraph{\textbf{Regularization Term}} We design a regularization term to enforce the condition \( \sqrt{\|M(t)\|_1 \cdot \|M(t)\|_\infty} < 1 \), penalizing violations when this condition is not satisfied. The regularization term is:

\begin{equation*}
\label{eq:reg}
\mathcal{L}_{reg} = \sum_{t=0}^{T} \max\left(0, \sqrt{\|M(t)\|_1 \cdot \|M(t)\|_\infty} - 1 \right).
\end{equation*}

In GODNF$_\text{static}$ variant, since \( W(t) = W \) is constant, which makes \( M(t) = M \) time invariant, the regularization term can be simplified to:

\[
\mathcal{L}_{reg} = \max\left(0, \sqrt{\|M\|_1 \cdot \|M\|_\infty} - 1 \right).
\]
This eliminates the need to sum over time steps, reducing computational complexity and making GODNF$_\text{static}$ more efficient than its dynamic counterpart.


\paragraph{\textbf{Training Loss}} We define training loss of GODNF as a combination of downstream task loss and the regularization term as follows:
\[
\mathcal{L}_{total} = \mathcal{L}_{task} +  \mathcal{L}_{reg}.
\]

\section{Theoretical Analysis}

\subsection{Complexity Analysis}
 
The time complexity of GODNF comprises the initialization and iterative update phases. Initial feature tranformation costs \(\mathcal{O}(n \cdot p_f)\), where \(p_f\) is the upper bound on the cost of computing \(f_{\theta}(h_i)\). The complexity of each iteration incurred by GODNF components is: initial feature attachment $\mathcal{O}(n \cdot d)$, neighborhood influence and structural regularization both $\mathcal{O}(m \cdot d)$ through sparse matrix multiplication, and norm regularization term $\mathcal{O}(m)$, which is efficient due to the sparsity of $M(t)$. Here, $d$ is the feature dimension. The total time complexity over \(T\) iterations is \(\mathcal{O}(n \cdot p_f + T \cdot (m \cdot d + n \cdot d + m)) = \mathcal{O}(n \cdot p_f + T \cdot m \cdot d)\), using the graph connectivity assumption. In practical scenarios where $ p_f$, $d$, and $T$ are usually small, this complexity is bounded by $\mathcal{O}(m)$, which is asymptotically optimal.

The space complexity of GODNF includes storing feature matrices \(X(0)\) and \(X(t)\) with \(\mathcal{O}(n \cdot d)\), the sparse matrices \(W(t)\) and \(L_g\)  with \(\mathcal{O}(m)\), and the diagonal matrix \(\Lambda\) with \(\mathcal{O}(n)\). Overall, the space complexity is \(\mathcal{O}(n \cdot d + m)\).

Overall, GODNF achieves comparable asymptotic complexity to traditional naive GNNs such as GCN \cite{kipf2017semi}. The absence of intermediate feature transformations in GODNF can lead to computational efficiency gains, especially for deeper layers. An empirical analysis of the scalability of our approach is provided in Section \ref{sec:scalability_analysis}.

\subsection{Representation of Diverse Convergence Configurations}

\label{sec:consensus_states}

In this section, we demonstrate that GODNF can capture multiple convergence configurations, emphasizing its adaptability to diverse learning scenarios. We start by defining these convergence configurations and show that GODNF can theoretically converge to these diverse configurations by adjusting its components. For the following theorems, we consider the following conditions of GODNF to hold: (1) $W^t \rightarrow W^*$, (2) \( \|M(t)\|_{\mathrm{op}} < 1 \) for all \( t \) where \( M(t) = (I - \Lambda)(W(t) - \mu L_g) \). All the proofs are provided in the Appendix.

\subsubsection{Single Consensus}

\begin{definition}
A diffusion model is said to reach a \emph{single consensus} if there exists a vector \( v \in \mathbb{R}^d \) such that, for all nodes \( i \in V \),
\[
\lim_{t \to \infty} x_i(t) = v,
\]
\end{definition}

We now establish conditions under which GODNF converges to a single consensus. 

\begin{restatable}[Single Consensus under Fully Diffusive Dynamics]{theorem}{singleconsensus} 
GODNF reaches a single consensus if
 \( \Lambda = 0 \), \( \mu = 0 \), \( \alpha = 0 \), and  $W^*$ is a row-stochastic matrix.

\end{restatable}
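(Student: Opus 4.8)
The plan is to show that the chosen parameter values collapse the general update rule into a pure averaging (French--DeGroot) dynamic and then invoke the classical consensus behaviour of products of stochastic matrices. First I would substitute $\Lambda = 0$, $\mu = 0$, and $\alpha = 0$ into the matrix form of Eq.~\eqref{eq:update_rule}. Since $\Lambda = 0$ gives $I - \Lambda = I$, the attachment term vanishes, the Laplacian term vanishes with $\mu = 0$, and the retention term vanishes with $\alpha = 0$, leaving
\[
X(t+1) = W(t)\,X(t).
\]
This is exactly the fully diffusive regime, and the goal reduces to proving that every row of $X(t)$ approaches a common vector.

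Next I would unfold the recursion as a backward product of the (row-stochastic) weight matrices, $X(t) = \Phi(t)\,X(0)$ with $\Phi(t) = W(t-1)W(t-2)\cdots W(0)$, noting that $\Phi(t)$ is itself row-stochastic. Single consensus is then equivalent to showing that $\Phi(t)$ converges to a rank-one stochastic matrix $\mathbf{1}\pi^{\top}$ for some probability vector $\pi$: once this holds, $X(t)\to \mathbf{1}\pi^{\top}X(0)$, so each node's feature row tends to the common limit $\pi^{\top}X(0)$, and writing $v$ for this shared vector yields the single consensus of the Definition.

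To prove convergence to rank one I would use the Dobrushin coefficient of ergodicity $\tau(\cdot)$, which is submultiplicative, lies in $[0,1]$, and vanishes precisely when all rows of its argument coincide. Because the graph is connected and the influence pattern is aperiodic (e.g.\ via self-loops in $\mathcal{N}(i)$), the limit $W^{*}$ is primitive, so some finite power satisfies $\tau\big((W^{*})^{k}\big) < 1$. Since $W(t)\to W^{*}$, the product of $W(t)$ over any sufficiently long block of consecutive time steps inherits an ergodicity coefficient bounded away from $1$; multiplying these blocks and using submultiplicativity forces $\tau(\Phi(t))\to 0$, i.e.\ the rows of $\Phi(t)$ asymptotically agree (weak ergodicity).

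The final and hardest step is to upgrade this weak ergodicity into genuine convergence of $\Phi(t)$ to a single limiting matrix (strong ergodicity). The difficulty is entirely due to the time-varying weights: for a fixed primitive $W^{*}$ this is the standard Perron--Frobenius statement $(W^{*})^{t}\to\mathbf{1}\pi^{\top}$, but an inhomogeneous product need not converge merely because its rows collapse. I would close this gap by exploiting that the sequence is asymptotically homogeneous ($W(t)\to W^{*}$): each column of $\Phi(t)$ stays in the convex hull of its predecessors while its oscillation is controlled by $\tau(\Phi(t))\to 0$, and then invoking the classical result that weak ergodicity together with convergence of the matrix sequence implies strong ergodicity. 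This yields $\Phi(t)\to \mathbf{1}\pi^{\top}$ and hence $\lim_{t\to\infty} x_i(t)=v$ for every node $i$. I would also emphasise that row-stochasticity of $W^{*}$ alone is insufficient (the identity matrix is a counterexample); the argument genuinely needs the primitivity supplied by graph connectivity, which is the implicit standing assumption of the fully diffusive setting.
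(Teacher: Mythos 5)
Your proof is correct but takes a genuinely different and considerably more rigorous route than the paper's. The paper's own argument is two lines: after substituting $\Lambda=0$, $\mu=0$, $\alpha=0$ it writes the dynamics directly as $X(t+1)=W^{*}X(t)$, identifies this with the French--DeGroot model, and cites DeGroot's consensus theorem. You instead keep the update honest as $X(t+1)=W(t)X(t)$ (the standing hypothesis is only $W(t)\to W^{*}$, not $W(t)=W^{*}$ for all $t$, so the paper's substitution silently discards the inhomogeneity), unfold it into a backward product $\Phi(t)=W(t-1)\cdots W(0)$, and control that product via Dobrushin's coefficient of ergodicity, using primitivity of $W^{*}$ to get blocks with $\tau<1$ and then passing from weak to strong ergodicity. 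This buys an actual treatment of the time-varying case that the paper elides; one small simplification available to you is that for backward products of stochastic matrices weak and strong ergodicity are equivalent (Chatterjee--Seneta), so the ``hardest step'' you describe is automatic and you do not need the asymptotic-homogeneity argument. Your closing observation is also a substantive and correct criticism of the statement itself: row-stochasticity of $W^{*}$ alone cannot suffice ($W^{*}=I$ is row-stochastic and freezes $X(0)$), so the theorem implicitly requires the connectivity/primitivity that DeGroot's theorem presupposes; the paper inherits this assumption silently through its citation, whereas you make it explicit. It would be worth flagging that the hypothesis should read ``$W^{*}$ is a primitive (or irreducible and aperiodic) row-stochastic matrix.''
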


Single consensus is valuable for modeling linear diffusion processes on networks, such as heat diffusion and basic opinion dynamics, as it captures the convergence of these processes.

\begin{table*}[!htbp]
\centering
\scalebox{0.825}{
\begin{tabular}{l@{\hspace{6pt}}c@{\hspace{6pt}}c@{\hspace{6pt}}c@{\hspace{6pt}}c@{\hspace{6pt}}c@{\hspace{6pt}}c@{\hspace{6pt}}c@{\hspace{6pt}}c@{\hspace{6pt}}c@{\hspace{6pt}}c}
\toprule
& Texas & Cornell & Wisconsin & Film & Amazon-rating & Cora Full & Citeseer & Cora-ML & PubMed & DBLP \\
\midrule
GCN & 60.00{\scriptsize$\pm$6.45} & 55.14{\scriptsize$\pm$8.46} & 61.60{\scriptsize$\pm$7.00} & 32.16{\scriptsize$\pm$1.34} & 37.99{\scriptsize$\pm$0.61} & 68.06{\scriptsize$\pm$0.98} & 77.72{\scriptsize$\pm$1.18} & 87.07{\scriptsize$\pm$1.21} & 86.74{\scriptsize$\pm$0.47} & 83.93{\scriptsize$\pm$0.84} \\
GAT & 61.21{\scriptsize$\pm$8.17} & 53.64{\scriptsize$\pm$11.10} & 60.00{\scriptsize$\pm$11.00} & 32.63{\scriptsize$\pm$1.61} & 42.52{\scriptsize$\pm$1.22} & 67.55{\scriptsize$\pm$1.23} & 76.79{\scriptsize$\pm$1.00} & 84.12{\scriptsize$\pm$0.55} & 87.24{\scriptsize$\pm$0.55} & 80.61{\scriptsize$\pm$1.21} \\
GraphSAGE & 82.46{\scriptsize$\pm$5.19} & 75.32{\scriptsize$\pm$5.96} & 80.00{\scriptsize$\pm$4.81} & 36.00{\scriptsize$\pm$1.20} & 41.32{\scriptsize$\pm$0.63} & 69.77{\scriptsize$\pm$0.46} & 77.42{\scriptsize$\pm$0.83} & 86.52{\scriptsize$\pm$1.32} & 84.50{\scriptsize$\pm$0.39} & 86.16{\scriptsize$\pm$0.50} \\
AirGNN & 65.57{\scriptsize$\pm$11.19} & 53.19{\scriptsize$\pm$10.43} & 56.25{\scriptsize$\pm$5.03} & 36.34{\scriptsize$\pm$2.73} &  42.29{\scriptsize$\pm$0.88} & 65.27{\scriptsize$\pm$0.72} &  78.20{\scriptsize$\pm$1.17} &  88.09{\scriptsize$\pm$1.25} & 83.51{\scriptsize$\pm$0.87} &  84.69{\scriptsize$\pm$0.30} \\
DirGNN & 76.25{\scriptsize$\pm$6.31} & 76.51{\scriptsize$\pm$6.14} & 80.50{\scriptsize$\pm$5.50} & 35.76{\scriptsize$\pm$1.68} & 46.66{\scriptsize$\pm$0.61} & 67.80{\scriptsize$\pm$0.53} & 77.71{\scriptsize$\pm$0.78} & 85.66{\scriptsize$\pm$0.31} & 86.94{\scriptsize$\pm$0.55} & 81.22{\scriptsize$\pm$0.54} \\
BEC-GCN & {68.85\scriptsize$\pm$11.02} & {65.96\scriptsize$\pm$9.03} & {66.25\scriptsize$\pm$4.55} & {34.34\scriptsize$\pm$2.08} & {42.17\scriptsize$\pm$0.58} & {71.34\scriptsize$\pm$0.44} & {78.14\scriptsize$\pm$1.31} & {88.65\scriptsize$\pm$1.32} & {87.00\scriptsize$\pm$0.28} & {85.20\scriptsize$\pm$0.48} \\
\midrule
APPNP & 62.30{\scriptsize$\pm$6.65} & 58.75{\scriptsize$\pm$5.97} & 61.70{\scriptsize$\pm$6.94} & 36.44{\scriptsize$\pm$1.78} & 43.48{\scriptsize$\pm$0.68} & 70.63{\scriptsize$\pm$0.51} & 77.89{\scriptsize$\pm$1.07} & 87.41{\scriptsize$\pm$1.54} & 83.37{\scriptsize$\pm$0.42} & 85.36{\scriptsize$\pm$0.40} \\
GDC & 72.13{\scriptsize$\pm$5.48} & 53.19{\scriptsize$\pm$12.19} & 62.50{\scriptsize$\pm$5.13} & 30.65{\scriptsize$\pm$1.20} & 40.06{\scriptsize$\pm$2.50} & 62.23{\scriptsize$\pm$0.59} & 78.17{\scriptsize$\pm$1.11} & 84.41{\scriptsize$\pm$1.59} & 80.61{\scriptsize$\pm$0.47} & 84.87{\scriptsize$\pm$0.40} \\
ODNet & 85.25{\scriptsize$\pm$8.36} & 86.49{\scriptsize$\pm$4.06} & 88.75{\scriptsize$\pm$2.15} & 37.57{\scriptsize$\pm$0.94} & 43.33{\scriptsize$\pm$0.91} & 70.23{\scriptsize$\pm$0.43} & 74.76{\scriptsize$\pm$1.15} & 76.26{\scriptsize$\pm$1.03} & 86.74{\scriptsize$\pm$0.37} & 83.45{\scriptsize$\pm$0.47} \\
GRAND & 81.70{\scriptsize$\pm$8.42} & 81.76{\scriptsize$\pm$13.90} & 84.00{\scriptsize$\pm$7.50} & 27.93{\scriptsize$\pm$1.25} & 37.53{\scriptsize$\pm$0.36} & 67.66{\scriptsize$\pm$1.01} & 76.55{\scriptsize$\pm$1.69} & 88.49{\scriptsize$\pm$0.81} & 86.79{\scriptsize$\pm$0.57} & 84.60{\scriptsize$\pm$0.99} \\
GRAND++ & 79.34{\scriptsize$\pm$7.22} & 81.34{\scriptsize$\pm$7.12} & 81.50{\scriptsize$\pm$6.00} & 36.32{\scriptsize$\pm$0.50} & 38.01{\scriptsize$\pm$0.50} & 67.53{\scriptsize$\pm$0.74} & 78.51{\scriptsize$\pm$1.58} & 88.44{\scriptsize$\pm$0.53} & 87.21{\scriptsize$\pm$0.33} & 85.21{\scriptsize$\pm$0.24} \\
ACMP & 87.65{\scriptsize$\pm$3.54} & 85.66{\scriptsize$\pm$5.10} & 86.50{\scriptsize$\pm$5.00} & 34.44{\scriptsize$\pm$1.36} & 37.32{\scriptsize$\pm$0.64} & 71.76{\scriptsize$\pm$0.03} & 75.73{\scriptsize$\pm$1.38} & 76.11{\scriptsize$\pm$2.12} & 88.01{\scriptsize$\pm$1.44} & 82.31{\scriptsize$\pm$0.44} \\
HiD-Net & 77.11{\scriptsize$\pm$6.91} & 83.33{\scriptsize$\pm$7.10} & 81.30{\scriptsize$\pm$6.60} & 28.86{\scriptsize$\pm$1.00} & 41.19{\scriptsize$\pm$1.03} & 68.11{\scriptsize$\pm$0.64} & 77.85{\scriptsize$\pm$0.65} & 89.00{\scriptsize$\pm$0.51} & 88.60{\scriptsize$\pm$0.45} & 84.92{\scriptsize$\pm$0.31} \\
GNRF & 87.39{\scriptsize$\pm$4.13} & \cellcolor{orange!30}87.28{\scriptsize$\pm$3.12} & 88.00{\scriptsize$\pm$2.00} & 34.22{\scriptsize$\pm$1.40} & 46.89{\scriptsize$\pm$1.08} & 72.12{\scriptsize$\pm$0.50} & 75.79{\scriptsize$\pm$0.94} & \cellcolor{orange!30}89.18{\scriptsize$\pm$0.19} & 90.37{\scriptsize$\pm$0.69} & 85.73{\scriptsize$\pm$0.76} \\
\midrule
GODNF$_\text{Static}$ & \cellcolor{orange!30}92.95{\scriptsize$\pm$2.32} & 87.23{\scriptsize$\pm$4.66} & \cellcolor{orange!30}91.62{\scriptsize$\pm$3.49} & \cellcolor{blue!25}{40.21{\scriptsize$\pm$1.85}} & \cellcolor{orange!30}49.17{\scriptsize$\pm$1.06} & \cellcolor{blue!25}{72.60{\scriptsize$\pm$0.58}} & \cellcolor{orange!30}78.83{\scriptsize$\pm$1.22} & 88.85{\scriptsize$\pm$1.52} & \cellcolor{orange!30}90.66{\scriptsize$\pm$0.47} & \cellcolor{orange!30}88.24{\scriptsize$\pm$2.01} \\
GODNF$_\text{Dynamic}$ & \cellcolor{blue!25}{93.93{\scriptsize$\pm$2.65}} & \cellcolor{blue!25}{87.87{\scriptsize$\pm$4.57}} & \cellcolor{blue!25}{92.50{\scriptsize$\pm$2.56}} & \cellcolor{orange!30}39.96{\scriptsize$\pm$2.21} & \cellcolor{blue!25}{49.40{\scriptsize$\pm$0.44}} & \cellcolor{orange!30}72.51{\scriptsize$\pm$0.53} & \cellcolor{blue!25}{78.85{\scriptsize$\pm$0.99}} & \cellcolor{blue!25}{89.26{\scriptsize$\pm$1.13}} & \cellcolor{blue!25}{90.78{\scriptsize$\pm$0.68}} & \cellcolor{blue!25}{88.62{\scriptsize$\pm$1.92}} \\
\bottomrule
\end{tabular}
}
\caption{Node classification accuracy ± standard deviation (\%). The best and second-best results are highlighted in \textcolor{blue} {blue}, and \textcolor{orange}{orange}, respectively. The baseline results are sourced from 
 Chen et al. \cite{chen2025graph}.
}
\label{tab:node_classification_results}
\end{table*}

\subsubsection{Multi Consensus}

\begin{definition}
A diffusion model is said to reach a \emph{multi consensus} if there exist distinct vectors \( v_1, v_2, \dots, v_k \in \mathbb{R}^d \) with $k < n$ such that, for all nodes \( i \in V \), 
\[
\lim_{t \to \infty} x_i(t) = v_j, \quad j \in \{1, 2, \dots, k\}.
\]
\end{definition}

We now present the conditions under which GODNF converges to a multi consensus.

\begin{restatable}[Multi-Consensus under Block-Structured Dynamics]{theorem}{multiconsensus} 
GODNF reaches multi consensus if the matrix $M^* = (1 - \Lambda) (W^* - \mu L_g)$  has a block diagonal structure corresponding to multiple disconnected components, with initial conditions and parameters differ for at least two components.

\end{restatable}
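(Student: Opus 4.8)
The plan is to exploit the block-diagonal structure of $M^*$ to \emph{decouple} the global dynamics into independent per-component subsystems, show that each subsystem collapses to its own single consensus, and then argue that differing initial data and parameters force at least two of these consensus values to be distinct.

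First I would set up the decomposition. The block structure of $M^* = (I-\Lambda)(W^* - \mu L_g)$ arises from $k$ disconnected components $V_1,\dots,V_k$ partitioning $V$, so $L_g$ is block-diagonal and the influence matrix carries no cross-component mass; hence $M(t) = (I-\Lambda)(W(t)-\mu L_g)$ is block-diagonal for \emph{every} $t$, not merely in the limit. Writing the feature matrix in block-partitioned form $X(t) = [X_1(t);\dots;X_k(t)]$ and restricting the update rule \eqref{eq:update_rule} to component $\ell$, all cross-block terms vanish and we recover an update of exactly the same algebraic form acting only on $X_\ell(t)$, governed by the sub-matrices $\Lambda_\ell$, $W_\ell(t)$ and $(L_g)_\ell$. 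Block-diagonality also gives $\|M(t)\|_{\mathrm{op}} = \max_\ell \|M_\ell(t)\|_{\mathrm{op}}$, so the standing hypothesis transfers to each block; the global system is thus equivalent to $k$ independent GODNF instances, one per induced subgraph $G[V_\ell]$.

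Next I would analyze each block in isolation, placing it in the fully diffusive regime of the single-consensus result (i.e. $\Lambda_\ell = 0$, $\mu = 0$, $\alpha = 0$, with $W^*_\ell$ row-stochastic and irreducible on the connected component $G[V_\ell]$). The iteration then reduces to $X_\ell(t+1) = W_\ell(t)\,X_\ell(t)$, and the standard consensus argument for inhomogeneous products of row-stochastic matrices converging to an irreducible aperiodic limit yields $X_\ell(t) \to \mathbf{1}\,\pi_\ell^\top X_\ell(0)$, where $\pi_\ell$ is the stationary distribution of $W^*_\ell$. Hence every node in $V_\ell$ converges to the common vector $v_\ell = \pi_\ell^\top X_\ell(0)$, which is precisely a single consensus for the subgraph $G[V_\ell]$.

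Finally I would establish distinctness and the count. Since $v_\ell = \pi_\ell^\top X_\ell(0)$ depends only on the block's initial data and its limiting transition matrix, the hypothesis that initial conditions and parameters differ for at least two components gives $v_\ell \neq v_{\ell'}$ for some $\ell \neq \ell'$; the limit vectors therefore take between $2$ and $k$ distinct values, and because at least one component contains more than one node the number of distinct values is strictly below $n$, matching the multi-consensus definition. I expect the main obstacle to be the third step: block-diagonality together with convergence does \emph{not} by itself force within-block consensus, since a Friedkin--Johnsen-type block with $\Lambda_\ell \neq 0$ settles into the heterogeneous fixed point $(I-M^*_\ell)^{-1}\Lambda_\ell X_\ell(0)$ exhibiting persistent intra-block diversity. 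The argument therefore hinges on restricting each block to the consensus regime and verifying that the Laplacian smoothing term does not obstruct collapse to a single vector; a secondary care point is a genericity argument ruling out the coincidence $\pi_\ell^\top X_\ell(0) = \pi_{\ell'}^\top X_{\ell'}(0)$ when the block data are nominally distinct.
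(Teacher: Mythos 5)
Your overall strategy differs from the paper's. The paper works directly with the limiting fixed point: it writes $X^* = (I - M^*)^{-1}\Lambda X(0)$, notes that $\|M^*\|_{\mathrm{op}} < 1$ makes $(I-M^*)$ invertible, observes that $B^* = (I-M^*)^{-1}$ inherits the block-diagonal structure of $M^*$, and concludes that each block settles independently to $X_\ell^* = B_\ell^*\Lambda_\ell X_\ell(0)$. You instead decouple the dynamics themselves into $k$ independent GODNF instances and then try to invoke the single-consensus theorem inside each block. Your first step (block-diagonality of $M(t)$ for all $t$, the identity $\|M(t)\|_{\mathrm{op}} = \max_\ell\|M_\ell(t)\|_{\mathrm{op}}$, and the resulting decoupled subsystems) is sound and is essentially a dynamical version of the paper's fixed-point decomposition.

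The gap is in your second step. To force within-block collapse to a single vector you place every block in the fully diffusive regime $\Lambda_\ell = 0$, $\mu = 0$, $\alpha = 0$. These are hypotheses the theorem does not grant: the statement assumes only that $M^*$ is block diagonal and that initial conditions and parameters differ across at least two components, so as written you prove a strictly narrower claim. Moreover, the regime you retreat to clashes with the section's standing assumption $\|M(t)\|_{\mathrm{op}} < 1$: with $\Lambda = 0$ and $\mu = 0$ one has $M(t) = W(t)$, a row-stochastic matrix whose operator norm is at least its spectral radius, which equals $1$; and if one insists on $\|M^*\|_{\mathrm{op}} < 1$ with $\Lambda = 0$, the fixed point $(I-M^*)^{-1}\Lambda X(0)$ is identically zero, giving a trivial single consensus rather than a multi consensus. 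The obstacle you flag at the end --- that a block with $\Lambda_\ell \neq 0$ converges to the heterogeneous vector $(I - M^*_\ell)^{-1}\Lambda_\ell X_\ell(0)$, which need not be constant over the nodes of that block --- is genuine, but your restriction does not legitimately remove it; it is in fact the same step the paper's own proof leaves unjustified when it asserts that ``each component reaches its own consensus independently.'' A proof of the theorem as stated would need either an argument that the limit vectors take fewer than $n$ distinct values under the stated hypotheses, or explicit additional conditions forcing within-block agreement; neither your proposal nor the paper's proof supplies this.
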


The multi consensus is beneficial for multi-label classification as it can avoid the oversmoothing problem in traditional GNNs. GODNF can preserve multiple clusters rather than converging to a single value, effectively aligning node features with class boundaries and handling both homophilic and heterophilic scenarios.

\subsubsection{Individualized Consensus}

\begin{definition}
A diffusion model is said to reach an \emph{individualized consensus} if there exist distinct vectors \( v_1, v_2, \dots, v_n \in \mathbb{R}^d \) such that, for all nodes \( i \in V \),
\[
\lim_{t \to \infty} x_i(t) = v_i, \quad i \in \{1, 2, \dots, n\}, \text{ with } v_i \neq v_j \text{ for } i \neq j.
\]
\end{definition}

We next look into the conditions that lead GODNF to reach individualized consensus, where each node converges to a unique value.

\begin{restatable}[Individualized Consensus under Strong Feature Retention]{theorem}{individualizedconsensus} 
GODNF reaches individualized consensus if the initial features \( x_i(0) \) are distinct for all nodes \( i \in V \), and \( \lambda_i \in (0, 1] \) are sufficiently large (i.e., close to 1).
\end{restatable}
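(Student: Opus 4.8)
The plan is to make the fixed point $X^*$ guaranteed by Theorem~\ref{thm:convergence} explicit, and then show that as the stubbornness parameters approach $1$ this fixed point degenerates continuously to the initial feature matrix $X(0)$, whose rows are distinct by hypothesis. Since having pairwise-distinct rows is an open condition, a continuity/perturbation argument then transfers distinctness from $X(0)$ to $X^*$ for all $\lambda_i$ sufficiently close to $1$, which is precisely individualized consensus with $v_i = (X^*)_{i,:}$.

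First I would derive a closed form for the fixed point. Passing $t \to \infty$ in the matrix update rule and using $W(t) \to W^*$, so that $M(t) \to M^* = (I-\Lambda)(W^* - \mu L_g)$, any limit satisfies $X^* = \alpha X^* + (1-\alpha)\bigl[\Lambda X(0) + M^* X^*\bigr]$. Because $\alpha < 1$, the factor $(1-\alpha)$ cancels, reducing this to $(I - M^*)X^* = \Lambda X(0)$. Since $\|M^*\|_{\mathrm{op}} < 1$, the Neumann series shows $I - M^*$ is invertible, giving
\[
X^* = (I - M^*)^{-1}\,\Lambda\,X(0),
\]
and Theorem~\ref{thm:convergence} certifies this as the unique limit of $X(t)$.

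Next I would analyze the regime $\Lambda \to I$, i.e.\ all $\lambda_i \to 1$. There $I - \Lambda \to 0$, hence $M^* \to 0$, $(I-M^*)^{-1} \to I$, and $\Lambda \to I$, so that $X^* \to X(0)$. I would also observe that this regime is consistent with the standing hypothesis $\|M(t)\|_{\mathrm{op}} < 1$: from the submultiplicativity $\|M(t)\|_{\mathrm{op}} \le \max_i(1-\lambda_i)\,\|W(t)-\mu L_g\|_{\mathrm{op}}$ and the uniform boundedness of $\|W(t)-\mu L_g\|_{\mathrm{op}}$ in $t$ (the weights are bounded under the decaying update rule), taking $\lambda_i$ close enough to $1$ forces the condition automatically, so the convergence machinery applies throughout the relevant parameter range.

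Finally I would close via continuity. The map $\Lambda \mapsto X^*(\Lambda) = (I - M^*(\Lambda))^{-1}\Lambda X(0)$ is continuous on the neighborhood of $\Lambda = I$ where $I - M^*$ stays invertible, since $M^*$ is linear in $\Lambda$ and matrix inversion is continuous on the invertible matrices. At $\Lambda = I$ we have $X^*(I) = X(0)$, whose rows $x_i(0)$ are pairwise distinct by assumption. The set of $n \times d$ matrices with pairwise-distinct rows is open, because its complement is the finite union over pairs $i \neq j$ of the closed linear subspaces $\{Y : Y_{i,:} = Y_{j,:}\}$. Hence there is a neighborhood of $\Lambda = I$ on which $X^*$ retains distinct rows; restricting to $\lambda_i \in (0,1]$ sufficiently close to $1$ yields $v_i \neq v_j$ for all $i \neq j$. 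I expect the main obstacle to be this last step rather than the algebra: one must pin down that $I - M^*$ remains invertible near $\Lambda = I$ and that distinctness of rows is a genuinely perturbation-stable (open) property, so that the informal phrase ``sufficiently close to $1$'' acquires a precise meaning; the fixed-point derivation and the limit $X^* \to X(0)$ are routine once Neumann-series invertibility is secured.
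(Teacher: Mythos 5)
Your proof follows essentially the same route as the paper's: derive the closed-form fixed point $X^* = (I - M^*)^{-1}\Lambda X(0)$ via Neumann-series invertibility of $I - M^*$, then let $\Lambda \to I$ so that $M^* \to 0$ and $X^* \to X(0)$, whose rows are distinct by hypothesis. Your added continuity/openness step (the set of matrices with pairwise-distinct rows is open, so distinctness persists for $\lambda_i$ sufficiently close to $1$) makes rigorous precisely the step the paper leaves informal (``$M \approx 0$ \dots naturally leads to individualized consensus''), so your version is, if anything, more complete than the original.
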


Individualized consensus is beneficial for node-level prediction tasks like regression. Unlike standard GNNs that oversmooth features, this consensus maintains distinct node features while still capturing network structure, offering adaptability to diverse graph settings where preserving node identity is essential.

\begin{remark}
In addition to the three fundamental convergence configurations above, GODNF can adapt to more complex convergence patterns due to its unification of diverse opinion dynamics mechanisms. This flexibility enables GODNF to align with sophisticated convergence configurations that capture the nuances present in the ground truth.
\end{remark}

\section{Experiments}

We evaluate GODNF on two types of tasks: node classification and influence estimation. The goal of the node classification task is to predict the label of
individual nodes in a graph. The influence estimation is a node regression task that focuses on determining a node's susceptibility to activation given a specific diffusion model and an initial set of activated nodes \cite{xia2021deepis}. Although node classification operates as a static downstream task, influence estimation requires capturing the temporal dynamics of the diffusion process to learn the underlying propagation characteristics accurately.

\subsection{Experimental Setup}

\subsubsection{Datasets} For node classification, we use ten benchmark datasets that include both homophily and heterophily label patterns. For homophily datasets, we use the Cora-ML, CiteSeer, PubMed, DBLP, and Cora Full datasets from the CitationFull benchmark \cite{bojchevski2018deep}. For heterophily datasets, we include Texas, Cornell, Wisconsin, and Film datasets from WebKB \cite{pei2020geom}, and the Amazon-rating dataset from the Heterophilous Graph benchmark \cite{platonovcritical}. Further, we employ the OGB benchmark's ogbn-arxiv dataset \cite{hu2020open} for our scalability analysis. For influence estimation, we use four real-world datasets: Jazz, Network Science, Power Grid \cite{rossi2015network}, and Cora-ML \cite{bojchevski2018deep}.

\begin{table*}[!htbp]
\centering
\scalebox{0.775}{
\begin{tabular}{l|ccc|ccc|ccc|ccc}
\hline
\multirow{2}{*}{} & \multicolumn{3}{c|}{Jazz} & \multicolumn{3}{c|}{Cora-ML} & \multicolumn{3}{c|}{Network Science} & \multicolumn{3}{c}{Power Grid} \\
\cline{2-13}
 & IC & LT & SIS & IC & LT & SIS & IC & LT & SIS & IC & LT & SIS \\
\hline
GCN & 0.233{\scriptsize$\pm$0.010} & 0.199{\scriptsize$\pm$0.006} & 0.344{\scriptsize$\pm$0.023} & 0.277{\scriptsize$\pm$0.007} & 0.255{\scriptsize$\pm$0.008} & 0.365{\scriptsize$\pm$0.065}  & 0.270{\scriptsize$\pm$0.019} & 0.190{\scriptsize$\pm$0.012} & 0.180{\scriptsize$\pm$0.007}  & 0.313{\scriptsize$\pm$0.024} & 0.335{\scriptsize$\pm$0.023} & 0.207{\scriptsize$\pm$0.001} \\
GAT & 0.342{\scriptsize$\pm$0.005} & 0.156{\scriptsize$\pm$0.100} & \cellcolor{blue!25}{0.288{\scriptsize$\pm$0.017}} & 0.352{\scriptsize$\pm$0.004} & 0.192{\scriptsize$\pm$0.010} & 0.208{\scriptsize$\pm$0.008}  & 0.274{\scriptsize$\pm$0.002} & 0.114{\scriptsize$\pm$0.008} & 0.123{\scriptsize$\pm$0.013}  & 0.331{\scriptsize$\pm$0.002} & 0.280{\scriptsize$\pm$0.015} & 0.133{\scriptsize$\pm$0.001}  \\
GraphSAGE & 0.201{\scriptsize$\pm$0.028} & 0.120{\scriptsize$\pm$0.004} & \cellcolor{orange!30}0.301{\scriptsize$\pm$0.018}  & 0.255{\scriptsize$\pm$0.010} & 0.203{\scriptsize$\pm$0.019} & 0.222{\scriptsize$\pm$0.051}  & 0.241{\scriptsize$\pm$0.010} & 0.112{\scriptsize$\pm$0.005} & 0.102{\scriptsize$\pm$0.005}  & 0.313{\scriptsize$\pm$0.024} & 0.341{\scriptsize$\pm$0.018} & 0.133{\scriptsize$\pm$0.001}  \\
DirGNN & {0.205\scriptsize$\pm$0.016} & {0.124\scriptsize$\pm$0.002} & {0.379\scriptsize$\pm$0.032} & 0.255{\scriptsize$\pm$0.006} & {0.193\scriptsize$\pm$0.012} & 0.229{\scriptsize$\pm$0.025} & 0.236{\scriptsize$\pm$0.011} & 0.084{\scriptsize$\pm$0.017} & 0.142{\scriptsize$\pm$0.019} & 0.287{\scriptsize$\pm$0.010} & 0.257{\scriptsize$\pm$0.062} & 0.132{\scriptsize$\pm$0.001} \\
SGNN & {0.183\scriptsize$\pm$0.004} & {0.164\scriptsize$\pm$0.014} & {0.330\scriptsize$\pm$0.007} & {0.210\scriptsize$\pm$0.003} & {0.134\scriptsize$\pm$0.004} & {0.211\scriptsize$\pm$0.006} & {0.213\scriptsize$\pm$0.003} & {0.049\scriptsize$\pm$0.004} & 0.127{\scriptsize$\pm$0.00} &0.257{\scriptsize$\pm$0.002} & 0.192{\scriptsize$\pm$0.001} & 0.175{\scriptsize$\pm$0.004} \\
UniGO & {0.192\scriptsize$\pm$0.013} & {0.159\scriptsize$\pm$0.020} & {0.335\scriptsize$\pm$0.002} & {0.255\scriptsize$\pm$0.001} & {0.155\scriptsize$\pm$0.019} & {0.212\scriptsize$\pm$0.001} & {0.201\scriptsize$\pm$0.001} & {0.110\scriptsize$\pm$0.049} & {0.108\scriptsize$\pm$0.023} & {0.231\scriptsize$\pm$0.002} & {0.235\scriptsize$\pm$0.005} & {0.206\scriptsize$\pm$0.023} \\
 \hline
 DeepIS & \cellcolor{orange!30}0.151{\scriptsize$\pm$0.003} & 0.219{\scriptsize$\pm$0.002} & 0.434{\scriptsize$\pm$0.003} & 0.203{\scriptsize$\pm$0.001}  & 0.301{\scriptsize$\pm$0.005} & 0.304{\scriptsize$\pm$0.001} & 0.223{\scriptsize$\pm$0.001} & 0.306{\scriptsize$\pm$0.001} & 0.256{\scriptsize$\pm$0.001} & \cellcolor{orange!30}0.206{\scriptsize$\pm$0.001} & 0.374{\scriptsize$\pm$0.001} & 0.251{\scriptsize$\pm$0.001} \\
 DeepIM & 0.178{\scriptsize$\pm$0.002} & 0.134{\scriptsize$\pm$0.014} & 0.383{\scriptsize$\pm$0.010} & 0.210{\scriptsize$\pm$0.002} & 0.271{\scriptsize$\pm$0.010} & 0.270{\scriptsize$\pm$0.006} & 0.216{\scriptsize$\pm$0.003} & 0.118{\scriptsize$\pm$0.003} & 0.135{\scriptsize$\pm$0.004} & 0.258{\scriptsize$\pm$0.003} & 0.331{\scriptsize$\pm$0.002} & 0.205{\scriptsize$\pm$0.005} \\
GLIE & \cellcolor{blue!25}{0.136{\scriptsize$\pm$0.003}} & 0.055{\scriptsize$\pm$0.028} & 0.454{\scriptsize$\pm$0.062} & 0.199{\scriptsize$\pm$0.041} & 0.286{\scriptsize$\pm$0.016} & 0.205{\scriptsize$\pm$0.029}  & \cellcolor{blue!25}{0.163{\scriptsize$\pm$0.031}} & 0.160{\scriptsize$\pm$0.063} & {0.103\scriptsize$\pm$0.023} &
\cellcolor{blue!25}{0.183{\scriptsize$\pm$0.009}} & 0.384{\scriptsize$\pm$0.020}  & 0.132{\scriptsize$\pm$0.026} \\
APPNP & 0.200{\scriptsize$\pm$0.006} & 0.124{\scriptsize$\pm$0.003} & 0.357{\scriptsize$\pm$0.059} & 0.265{\scriptsize$\pm$0.022} & 0.220{\scriptsize$\pm$0.031} & 0.321{\scriptsize$\pm$0.042} & 0.248{\scriptsize$\pm$0.006} & 0.084{\scriptsize$\pm$0.006} & 0.100{\scriptsize$\pm$0.012} & 0.290{\scriptsize$\pm$0.006} & 
0.189{\scriptsize$\pm$0.011} & 0.132{\scriptsize$\pm$0.001} \\
ODNet & 0.180{\scriptsize$\pm$0.003} & 0.053{\scriptsize$\pm$0.003} & 0.322{\scriptsize$\pm$0.006} & 0.232{\scriptsize$\pm$0.001} & 0.210{\scriptsize$\pm$0.004} & \cellcolor{orange!30}0.196{\scriptsize$\pm$0.002} & 0.255{\scriptsize$\pm$0.002} & 0.104{\scriptsize$\pm$0.015} & 0.106{\scriptsize$\pm$0.005} & 0.274{\scriptsize$\pm$0.001} & 0.296{\scriptsize$\pm$0.002} & 0.145{\scriptsize$\pm$0.001} \\
HIDNet & 0.216{\scriptsize$\pm$0.003} & 0.180{\scriptsize$\pm$0.003} & 0.404{\scriptsize$\pm$0.005} & 0.258{\scriptsize$\pm$0.001} & 0.273{\scriptsize$\pm$0.006} & 0.360{\scriptsize$\pm$0.003} & 0.242{\scriptsize$\pm$0.002} & 0.166{\scriptsize$\pm$0.005} & 0.168{\scriptsize$\pm$0.001} & 0.282{\scriptsize$\pm$0.001} & 0.295{\scriptsize$\pm$0.007} & 0.224{\scriptsize$\pm$0.001} \\
GNRF & 0.195{\scriptsize$\pm$0.006} & 0.150{\scriptsize$\pm$0.020} & 0.444{\scriptsize$\pm$0.024} & 0.255{\scriptsize$\pm$0.005} & 0.312{\scriptsize$\pm$0.029} & 0.280{\scriptsize$\pm$0.027} & 0.294{\scriptsize$\pm$0.005} & 0.293{\scriptsize$\pm$0.001} & 0.186{\scriptsize$\pm$0.006} & 0.280{\scriptsize$\pm$0.003} & 0.459{\scriptsize$\pm$0.019} & 0.196{\scriptsize$\pm$0.018} \\
\hline
GODNF$_\text{Static}$ & 0.177{\scriptsize$\pm$0.005} & \cellcolor{orange!30}0.051{\scriptsize$\pm$0.002} & 0.318{\scriptsize$\pm$0.004} & \cellcolor{orange!30}0.150{\scriptsize$\pm$0.001} & \cellcolor{orange!30}0.104{\scriptsize$\pm$0.005} & 0.197{\scriptsize$\pm$0.001} & 0.205{\scriptsize$\pm$0.002} & \cellcolor{orange!30}0.036{\scriptsize$\pm$0.002} & \cellcolor{orange!30}0.098{\scriptsize$\pm$0.002} & 0.223{\scriptsize$\pm$0.002} & \cellcolor{orange!30}0.116{\scriptsize$\pm$0.003} & \cellcolor{orange!30}0.120{\scriptsize$\pm$0.002} \\
GODNF$_\text{Dynamic}$ & 0.175{\scriptsize$\pm$0.003} & \cellcolor{blue!25}{0.050{\scriptsize$\pm$0.001}} & 0.309{\scriptsize$\pm$0.006} & \cellcolor{blue!25}{0.148{\scriptsize$\pm$0.001}} & \cellcolor{blue!25}{0.094{\scriptsize$\pm$0.005}} & \cellcolor{blue!25}{0.192{\scriptsize$\pm$0.002}} & \cellcolor{orange!30}0.197{\scriptsize$\pm$0.002} & \cellcolor{blue!25}{0.032{\scriptsize$\pm$0.003}} & \cellcolor{blue!25}{0.096{\scriptsize$\pm$0.001}} & 0.221{\scriptsize$\pm$0.001} & \cellcolor{blue!25}{0.108{\scriptsize$\pm$0.004}} & \cellcolor{blue!25}{0.119{\scriptsize$\pm$0.002}} \\
\hline
\end{tabular}
}
\caption{Influence estimation mean absolute error ± standard deviation. Lower value indicates better performance. The best and second-best results are highlighted in \textcolor{blue}{blue}, and \textcolor{orange}{orange}, respectively. }
\label{tab:influence_estimation_results}
\end{table*}

\subsubsection{Baselines}
For node classification, we compare GODNF against both spatial and diffusion-based GNNs. Our spatial GNN baselines consist of three classical architectures: GCN \cite{kipf2017semi}, GAT \cite{velivckovic2018graph}, and GraphSAGE \cite{hamilton2017inductive}, as well as three advanced models: AirGNN \cite{liu2021graph}, DirGNN \cite{rossi2024edge}, and BEC-GCN \cite{hevapathige2025depth}. Diffusion GNNs include both discrete and continuous variants, spanning classical and state-of-the-art models such as APPNP \cite{gasteiger2018predict}, GDC \cite{gasteiger2019diffusion},  ODNET \cite{zhouodnet}, GRAND \cite{chamberlain2021grand}, GRAND++ \cite{thorpegrand++}, ACMP \cite{wang2022acmp}, HiD-Net \cite{li2024generalized}, and GNRF \cite{chen2025graph}.  AirGNN and APPNP have residual connections in their architectures that are conceptually similar to the FJ model, while ODNet is designed based on the HK model. For influence estimation, we use a comprehensive set of aforementioned general-purpose baselines that span spatial and diffusion GNNs and GNNs specifically designed for the influence estimation task. These specialized models include SGNN \cite{kumar2022influence}, DeepIM \cite{ling2023deep}, DeepIS \cite{xia2021deepis}, and GLIE \cite{panagopoulos2023maximizing}. Additionally, we tailor UniGO \cite{li2025unigo} as a baseline for influence estimation.

\subsubsection{Evaluation settings}

For node classification, we use a 60:20:20 random split strategy for the training, validation, and test sets and report the mean and standard deviation of accuracy over 10 random initializations, following the experimental setup of Chen et al. \cite{chen2025graph}. We evaluate influence estimation performance under three diffusion models: Linear Threshold (LT) \cite{kempe2003maximizing}, Independent Cascade (IC) \cite{goldenberg2001talk}, and Susceptible-Infected-Susceptible (SIS) \cite{kimura2009efficient}. LT and IC are progressive diffusion models (i.e., nodes activate once and remain activated permanently), whereas SIS is non-progressive (i.e., nodes can undergo multiple activation cycles) \cite{li2023influence}. For data splits, we follow the experimental setup of Ling et al. \cite{ling2023deep}, with an initial activation node set consisting of 10\% of the total nodes in the graph. For evaluation, we employ 10-fold cross-validation and report the mean and standard deviation of Mean Absolute Error (MAE). For baseline results, we report the results from previous papers that used the same experimental setup. When such results are unavailable, we generate baseline results by adopting the hyperparameter configurations specified in the original papers.

Additional experimental details, including model hyperparameters, implementation details, and benchmark dataset statistics, are provided in the Appendix \ref{sec:experimental_details}.

\subsection{Main Results}

\subsubsection{Node Classification}

We present node classification results in Table \ref{tab:node_classification_results}. GODNF consistently outperforms baseline models in both heterophily and homophily datasets. In particular, GODNF demonstrates significant superiority over spatial and discrete diffusion GNNs, especially within heterophily datasets. Additionally, GODNF comfortably surpasses continuous diffusion GNNs by exceeding their expressive power. We credit GODNF's success to its capacity to model complex diffusion dynamics in graphs, which is enhanced by its representation power derived from multiple opinion dynamic components. The performance of models such as ODNET, APPNP, and AirGNN further supports this point. Although these models incorporate elements related to opinion dynamics, they do not achieve the same level of performance as GODNF, primarily because their components lack the representation capabilities found in GODNF. Further, we observe that the dynamic variant of GODNF performs better than the static variant because it captures temporal dependencies between diffusion time steps. 

\subsubsection{Influence Estimation}

We report influence estimation results in Table \ref{tab:influence_estimation_results}. GODNF outperforms or provides comparable performance to both general-purpose spatial and diffusion GNNs, as well as GNNs specifically designed for influence estimation. While DeepIS and GLIE excel in the IC model due to their tailored update rules, they exhibit suboptimal performance within other diffusion models. In contrast, GODNF shows robust generalizability, consistently maintaining better performance across all diffusion models. Additionally, GODNF$_\text{dynamic}$ consistently outperforms GODNF$_\text{static}$ because the temporal dynamics captured by the dynamic variant are beneficial for tasks like influence estimation, which rely on temporal characteristics.

\begin{figure*}
    \centering
    \includegraphics[width=\linewidth]{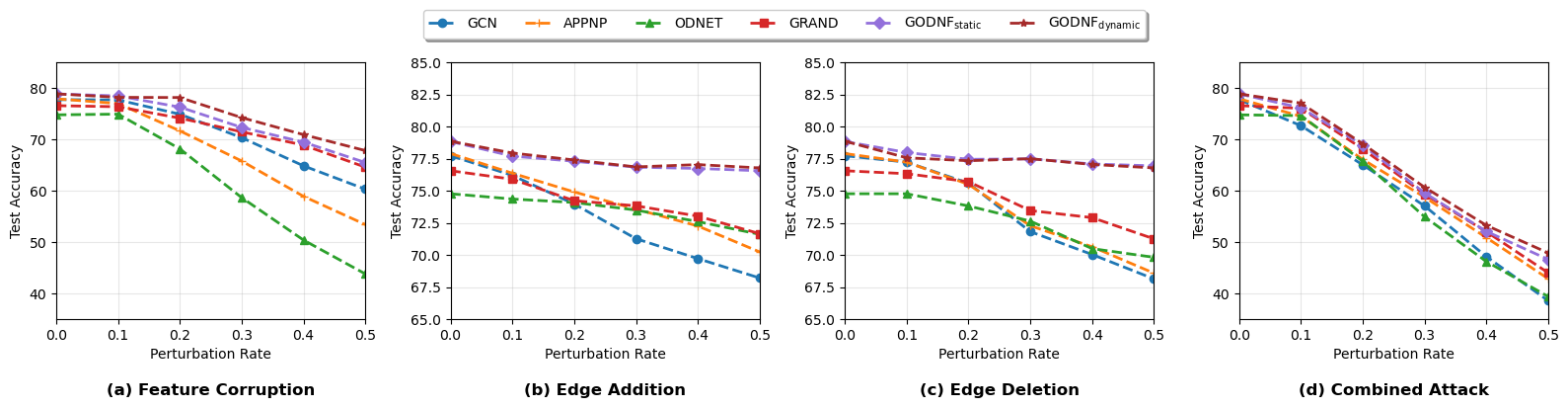}
    \caption{Node classification performance for Citeseer dataset under different adversarial attacks}
    \label{fig:attack_benchmark}
\end{figure*}

\subsubsection{Robustness to adversarial attacks}

We evaluate the robustness of GODNF under a diverse set of adversarial attack scenarios, benchmarking its resilience against four distinct attack strategies and comparing it with established baselines. In the feature corruption setting, Gaussian noise is injected into node features by sampling a noise matrix and adding it to the original features, following Li et al. \cite{li2024generalized}. The edge addition attack perturbs the topology by inserting random edges (excluding self-loops and duplicates) at a specified perturbation rate, while edge deletion removes randomly selected edges. The combined attack applies edge deletion, edge addition, and feature corruption sequentially at the same perturbation rate. Results for the Citeseer dataset are presented in Figure \ref{fig:attack_benchmark}, with additional results for other datasets provided in the Appendix \ref{sec:adversarial_attack}.

GODNF exhibits consistent and superior robustness across all adversarial attack types. Under feature corruption, it shows only gradual performance degradation versus steep drops in baselines. For structural attacks such as edge addition or deletion, GODNF variants maintain near-constant performance while baselines degrade significantly. Combined attacks further accentuate GODNF's resilience. This robustness stems from GODNF's multi-component defence mechanism, creating redundant information pathways. The initial feature attachment acts as persistent memory, structural regularization provides graph-based smoothing, and bounded weight evolution ensures stability. When attacks compromise one component, the others compensate, thereby preventing the single-pathway failures common in standard GNNs. 

\subsubsection{Oversmoothing Analysis}

We evaluate the performance of GODNF in a deep layer setting to understand its resilience to oversmoothing. The results are depicted in Figure \ref{fig:oversmoothing_comparison}. Unlike traditional GNNs like GCN and GRAND, which show substantial performance degradation when the layer depth is increased, both GODNF variants show robustness against oversmoothing by maintaining consistent performance across varying layer depths.

\begin{figure}[htbp]
    \centering
    \includegraphics[width=0.475\textwidth]{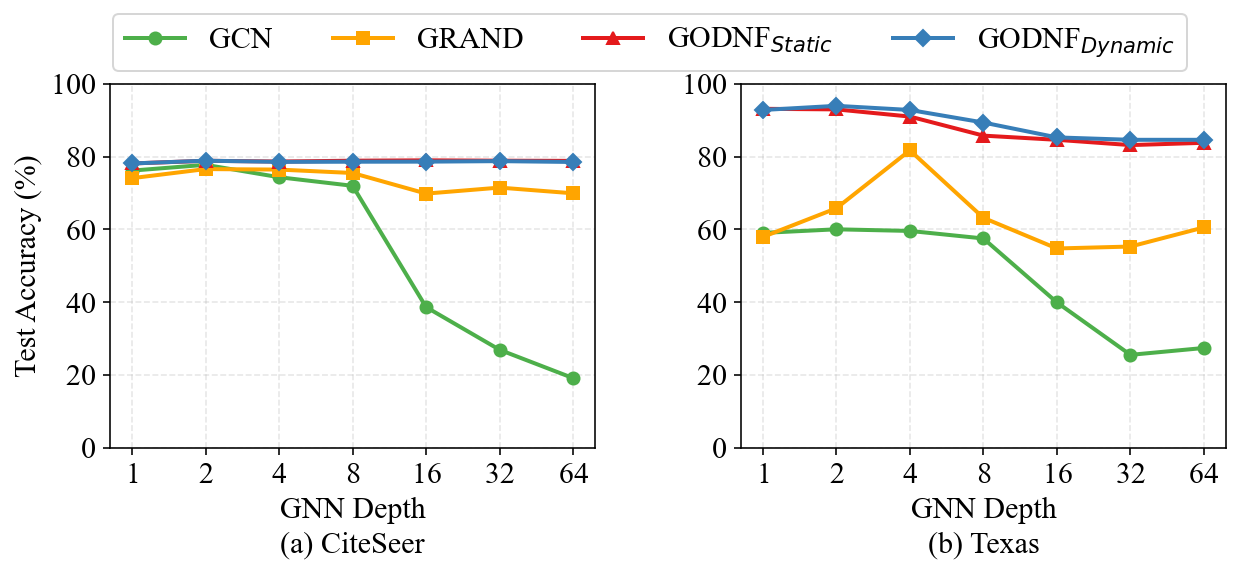}
    \caption{Deep layer comparison for node classification}
    \label{fig:oversmoothing_comparison}
\end{figure}

\subsection{Ablation Studies}

\subsubsection{Impact of Different Components}

We conduct an ablation study to evaluate the importance of each component of GODNF on its performance. To demonstrate this, we derive variants
of GODNF by omitting each component and assess their performance. The results are depicted in Figure \ref{fig:component_analysis}.

\begin{figure}[htbp]
    \centering
    \includegraphics[width=0.475\textwidth]{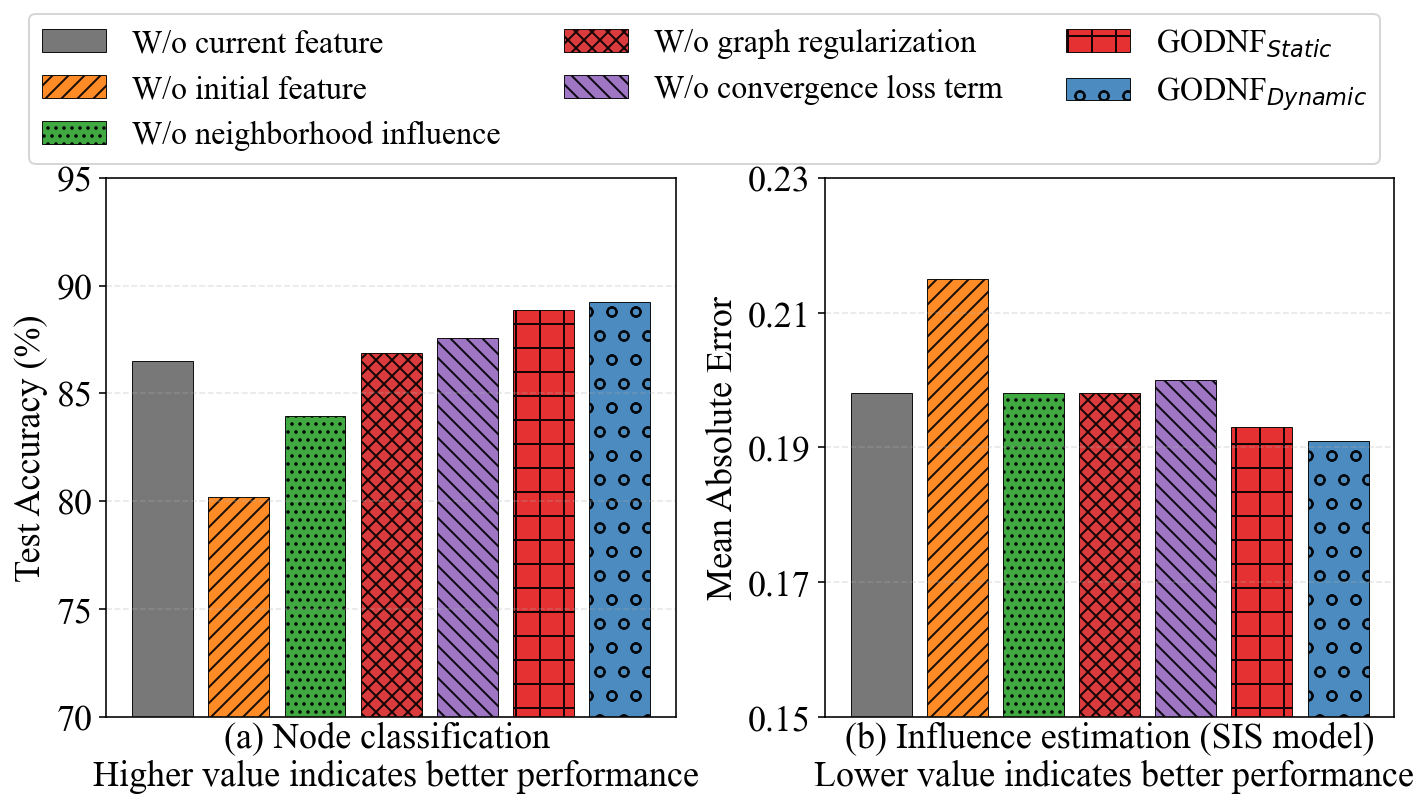}
    \caption{Performance comparison of GODNF component variants on Cora ML dataset}
    \label{fig:component_analysis}
\end{figure}

As anticipated, removing each component decreases performance, demonstrating their individual importance. Further, we observe that the initial feature attachment has more impact on performance than other components. We believe this is due to complex node-specific behaviors captured by that component.

\subsubsection{Parameter Analysis}

We evaluate the sensitivity of GODNF to its hyperparameter $\alpha$ in Eq. \eqref{eq:update_rule}, by varying its value from 0 to 0.9. In Figure \ref{fig:parameter_analysis}, we observe that the performance of homophilic datasets (Cora ML and Coral Full) in node classification tends to decrease slowly as $\alpha$ increases, while the opposite behavior can be seen in heterophilic datasets (Film and Texas). This is because a decrease of $\alpha$ promotes neighborhood information aggregation, which helps homophily datasets where neighbors tend to share the same label, but adversely impacts heterophilic datasets, where neighbors tend to have different labels.

\begin{figure}[htbp]
    \centering
    \includegraphics[width=0.475\textwidth]{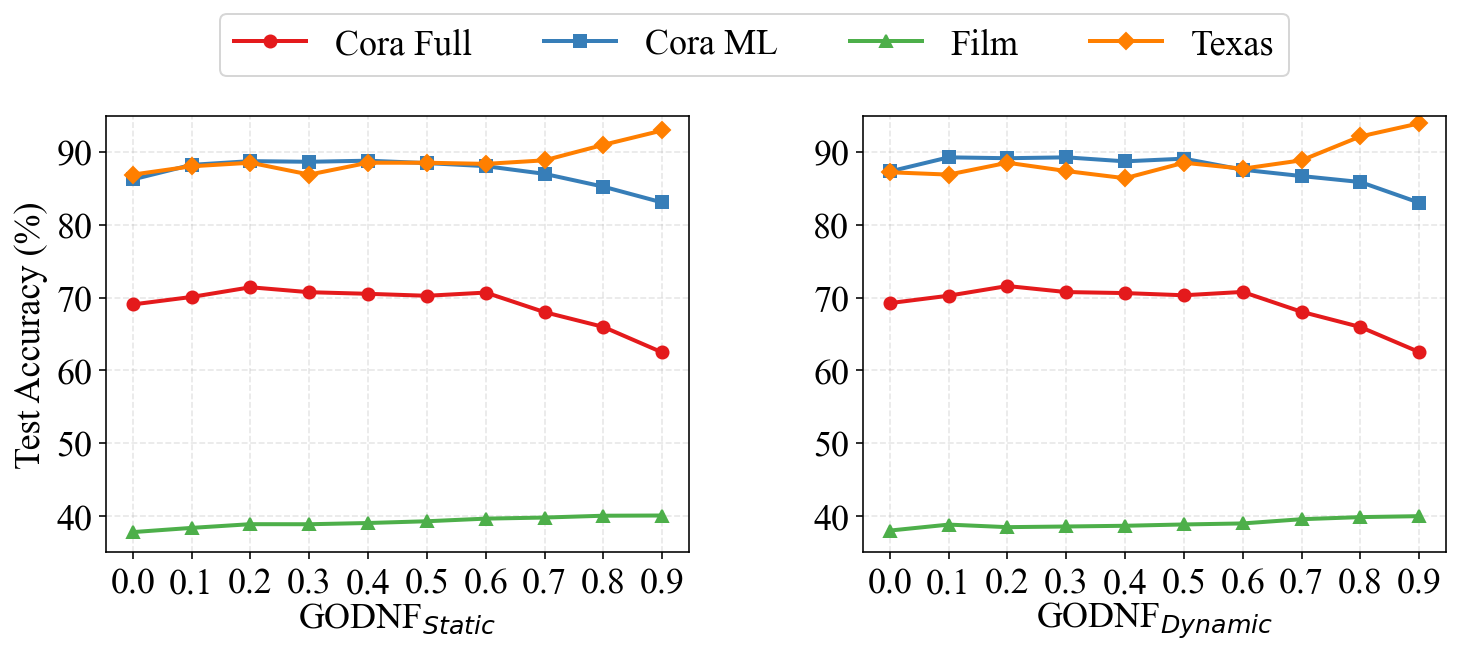}
    \caption{Sensitivity analysis of $\alpha$}
    \label{fig:parameter_analysis}
\end{figure}

Then, we present the distribution of learned node stubbernness $\lambda$ for datasets under node classification in Figure \ref{fig:stubbernness_values}. The results reveal few interesting observations. First, most nodes exhibit moderate stubbornness (0.4-0.7), highlighting the importance of balancing initial features with neighborhood information. Second, the distribution variability shows adaptive learning of heterogeneous aggregation requirements across nodes. Third, heterophily datasets tend to yield higher stubbornness values compared to homophily datasets, suggesting that when neighbors have dissimilar labels, nodes benefit from retaining their initial representations to mitigate information loss from inappropriate aggregation.

We further observe that the learned value for $\mu$ is in the range of 0.7-0.9 for most datasets. We believe this range offers essential structural regularisation for convergence stability.


\begin{figure}[htbp]
    \centering
    \scalebox{0.6}{
    \begin{tabular}{cc}
        \includegraphics[width=0.35\textwidth]{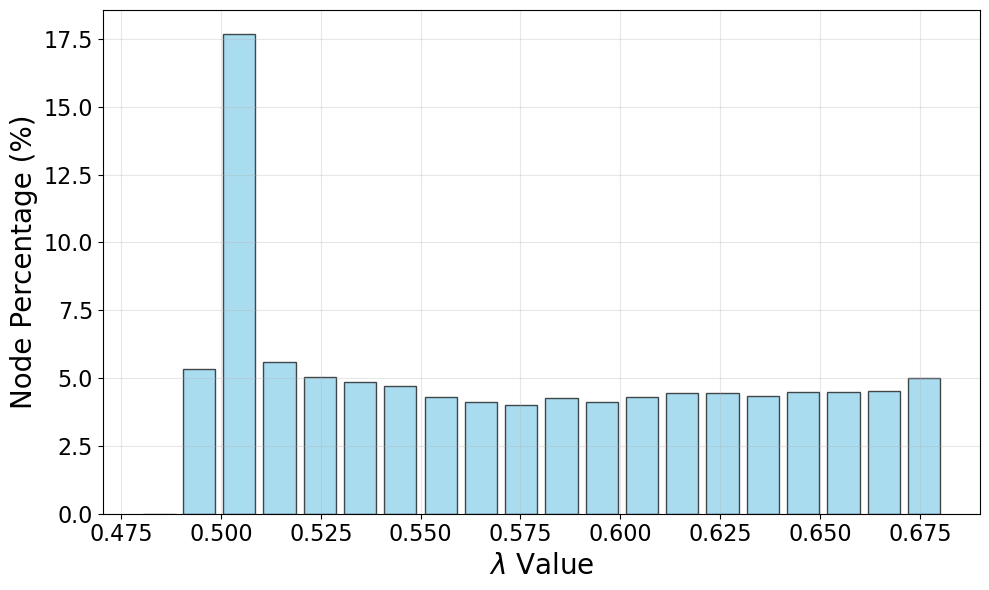}  &
        \includegraphics[width=0.35\textwidth]{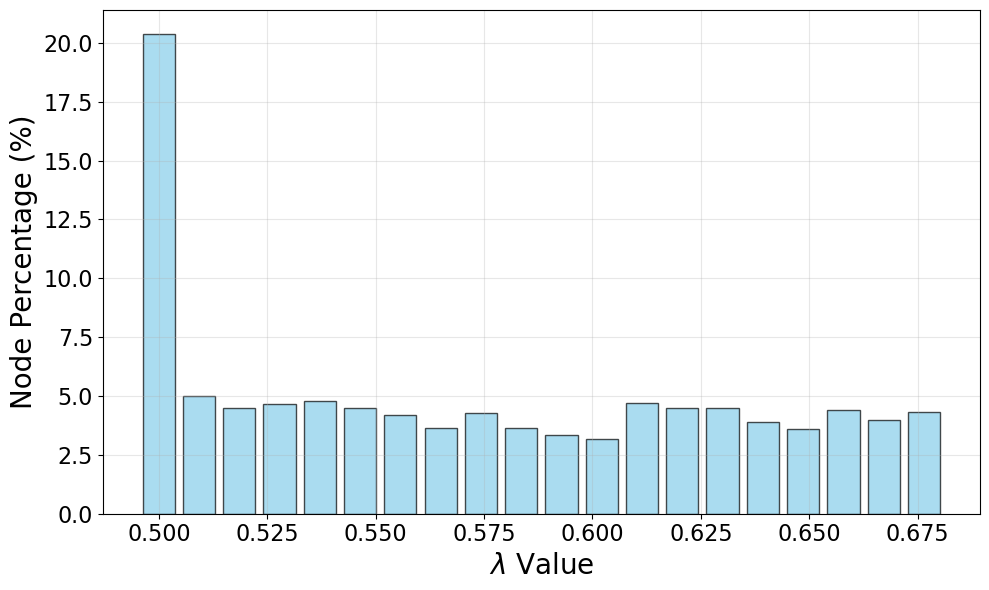} \\
        (a) Cora Full & (b) Cora ML \\
        \includegraphics[width=0.35\textwidth]{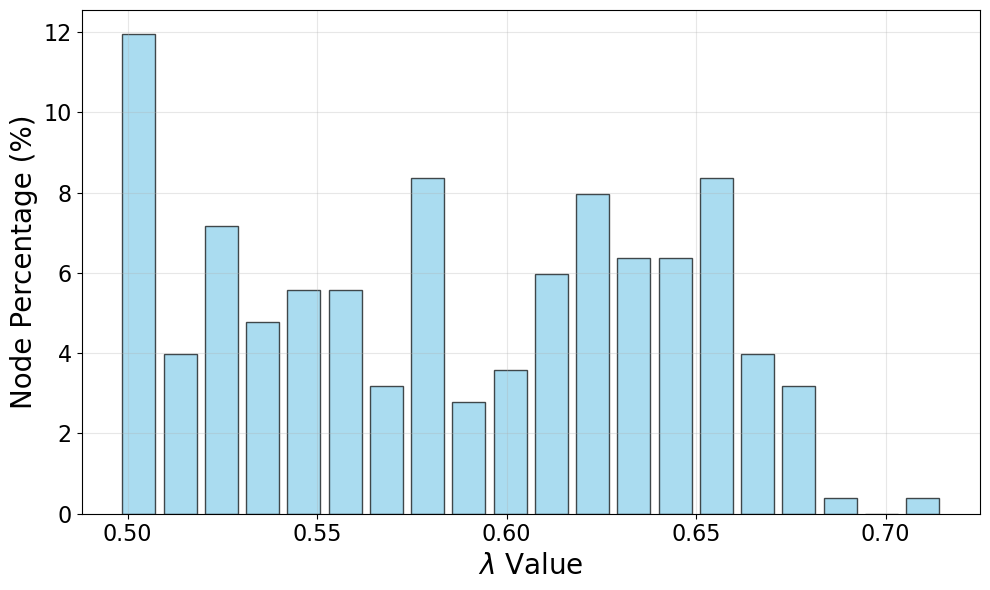} &
        \includegraphics[width=0.35\textwidth]{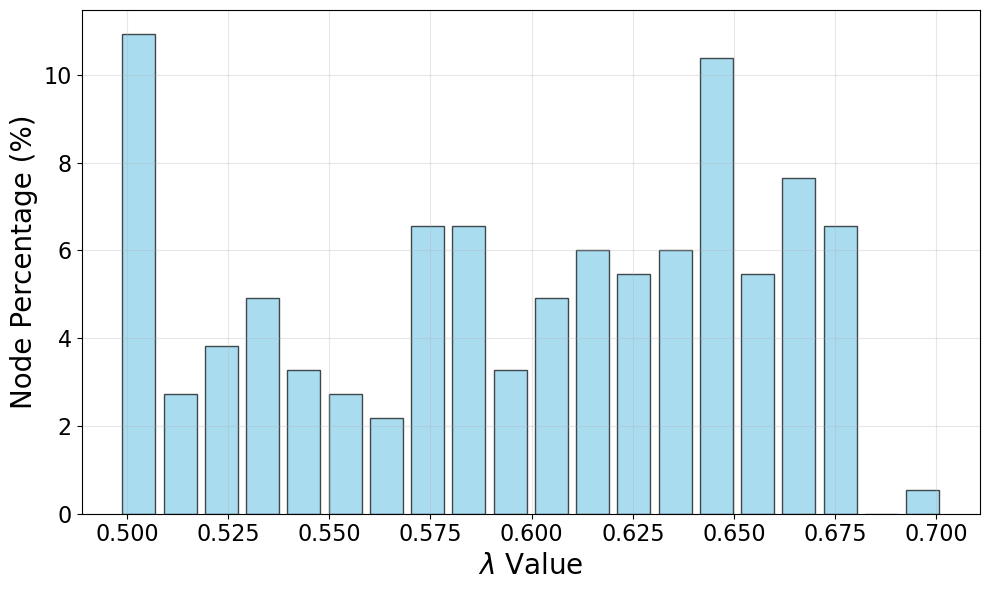} \\
        (c) Film & (d) Texas
    \end{tabular}
    }
    \caption{Distribution of learned node stubbornness $\lambda_i$} 
    \label{fig:stubbernness_values}
\end{figure}


\subsubsection{Scalability Analysis}
\label{sec:scalability_analysis}

We compare the scalability of our approach with existing GNNs in Table \ref{tab:scalability_comparison}. For this, we perform node classification task and employ two datasets: Cora Full and ogbn-arxiv. ogbn-arxiv is a large-scale dataset with 169,343 nodes and 1,166,243 edges. For a fair comparison, we fix the hidden layer size at 128 for all GNNs, and record average runtime per epoch, number of learnable parameters, and accuracy metrics.

\begin{table}[htbp]
\centering
\scalebox{0.7}{
\begin{tabular}{l|l|c|c|c|c|c|c}
\hline
\multirow{3}{*}{Layers} & \multirow{3}{*}{Model} & \multicolumn{3}{c|}{Cora Full} & \multicolumn{3}{c}{ogbn-arxiv} \\ \cline{3-8} 
 &  & \# Param & Time & Acc. & \# Param & Time & Acc. \\ 
 &  & (k) & (ms) & (\%) & (k) & (ms) & (\%) \\ \hline
\multirow{6}{*}{4} & GCN & 1157 & 20.6 & 68.44 & 55 & 154.2 & 69.53 \\ \cline{2-8} 
 & GAT & 4631 & 65.3 & 59.87 & 206 & OOM & OOM \\ \cline{2-8} 
 & GRAND & 716 & 111.1 & 69.27 & 81 & 2077.0 & 69.30 \\ \cline{2-8} 
 & ACMP & 1158 & 1107.6 & 70.94 & 55 & OOM & OOM \\ \cline{2-8} 
 & GODNF$_\text{Static}$ & 1144 & 31.7 & 72.42 & 191 & 926.4 & 69.76 \\ \cline{2-8} 
 & GODNF$_\text{Dynamic}$ & 1144 & 35.5 & \cellcolor{blue!25}{72.58} & 191 & 936.9 & \cellcolor{blue!25}{70.43} \\ \hline
\multirow{6}{*}{8} & GCN & 1223 & 34.6 & 65.87 & 123 & 327.2 & 68.39 \\ \cline{2-8} 
 & GAT & 4897 & 121.3 & 31.10 & 474 & OOM & OOM \\ \cline{2-8} 
 & GRAND & 716 & 150.7 & 69.81 & 81 & 3196.5 & 67.24 \\ \cline{2-8} 
 & ACMP & 1158 & 2231.5 & 71.76 & 55 & OOM & OOM \\ \cline{2-8} 
 & GODNF$_\text{Static}$ & 1144 & 51.1 & 72.42 & 191 & 2156.1 & 70.23 \\ \cline{2-8} 
 & GODNF$_\text{Dynamic}$ & 1144 & 63.4 & \cellcolor{blue!25}{72.49} & 191 & 2378.4 & \cellcolor{blue!25}{70.91} \\ \hline
\end{tabular}
}
\caption{Scalability comparison for node classification. Best results are highlighted.}
\label{tab:scalability_comparison}
\end{table}

As observed, GODNF is significantly more effective than traditional GNNs like GAT and diffusion GNNs like GRAND and ACMP. Furthermore, GODNF has a runtime and parameter complexity comparable to that of highly scalable traditional GNNs like GCN. This efficiency primarily comes from eliminating the learnable transformation layers found in conventional GNNs, resulting in less parameter count and memory usage, even at increasing depths.

\subsubsection{Case Study}

We empirically validate GODNF's ability to depict multiple convergence configurations. Specifically, we create a stochastic block model (SBM) graph \cite{abbe2018community} (50 nodes and five communities) and generate node labels to be aligned with different convergence configurations, as described in Section \ref{sec:consensus_states}, and train our model to learn these label patterns. In a multi-consensus scenario, we examine homophily and heterophily label patterns separately.  The visualization of color-coded node labels learned by our model for each convergence configuration is depicted in Figure \ref{fig:consensus_states}.

\begin{figure}[htbp]
    \centering
    \scalebox{0.65}{
    \begin{tabular}{cc}
        \includegraphics[width=0.35\textwidth]{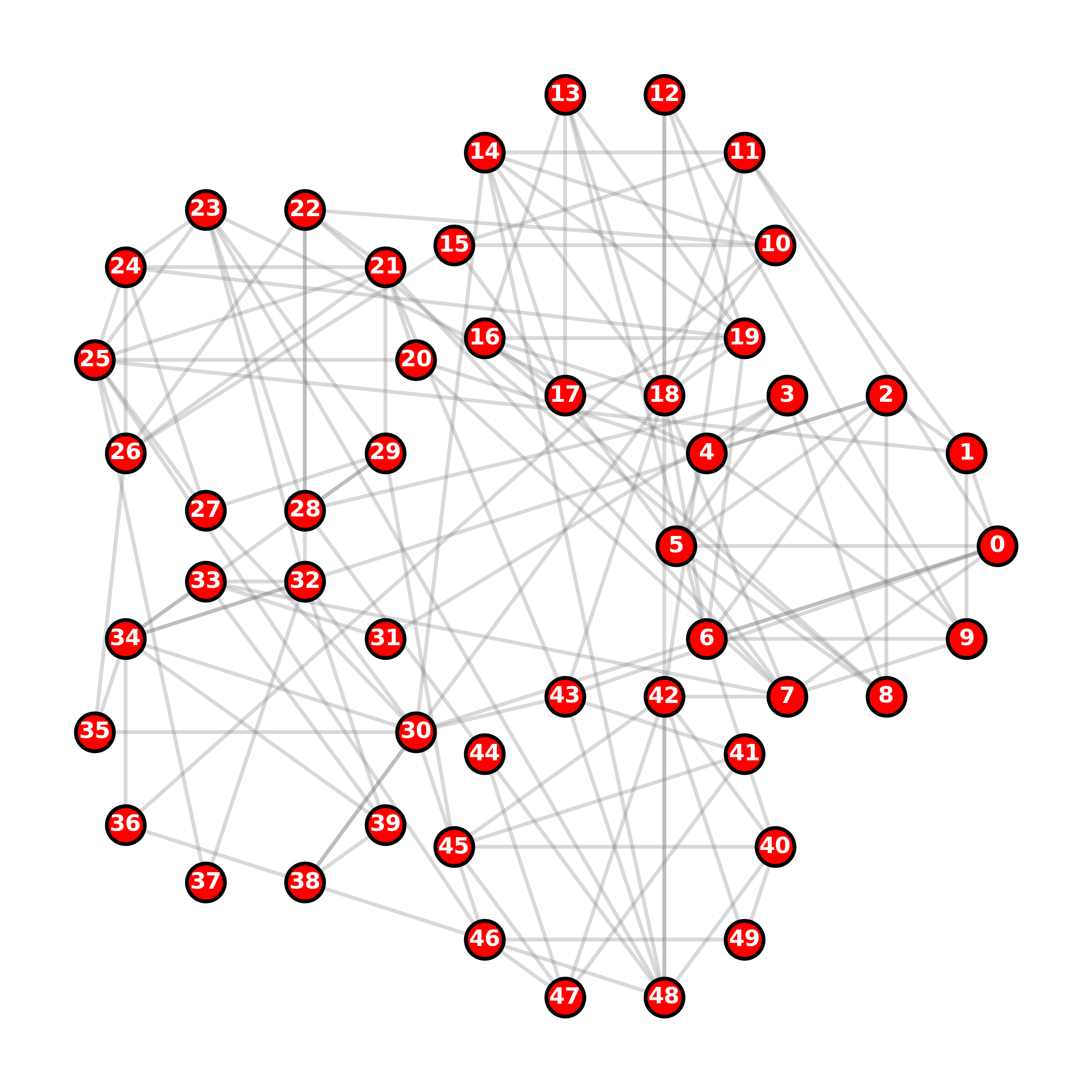} \label{fig:5a} &
        \includegraphics[width=0.35\textwidth]{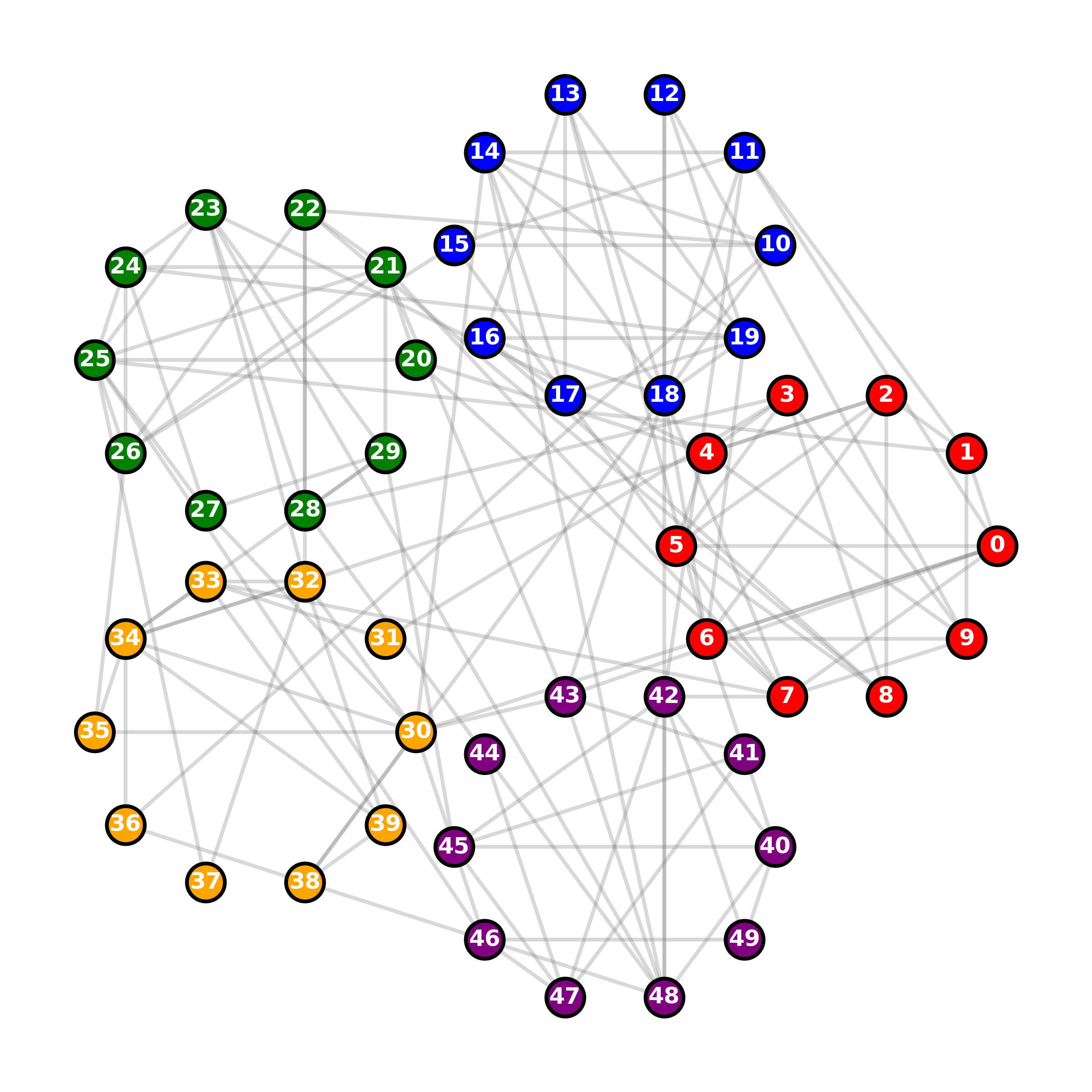}  \label{fig:5b} \\
        (a) Single Consensus & (b) Multi Consensus - Homophily Pattern \\
        \includegraphics[width=0.35\textwidth]{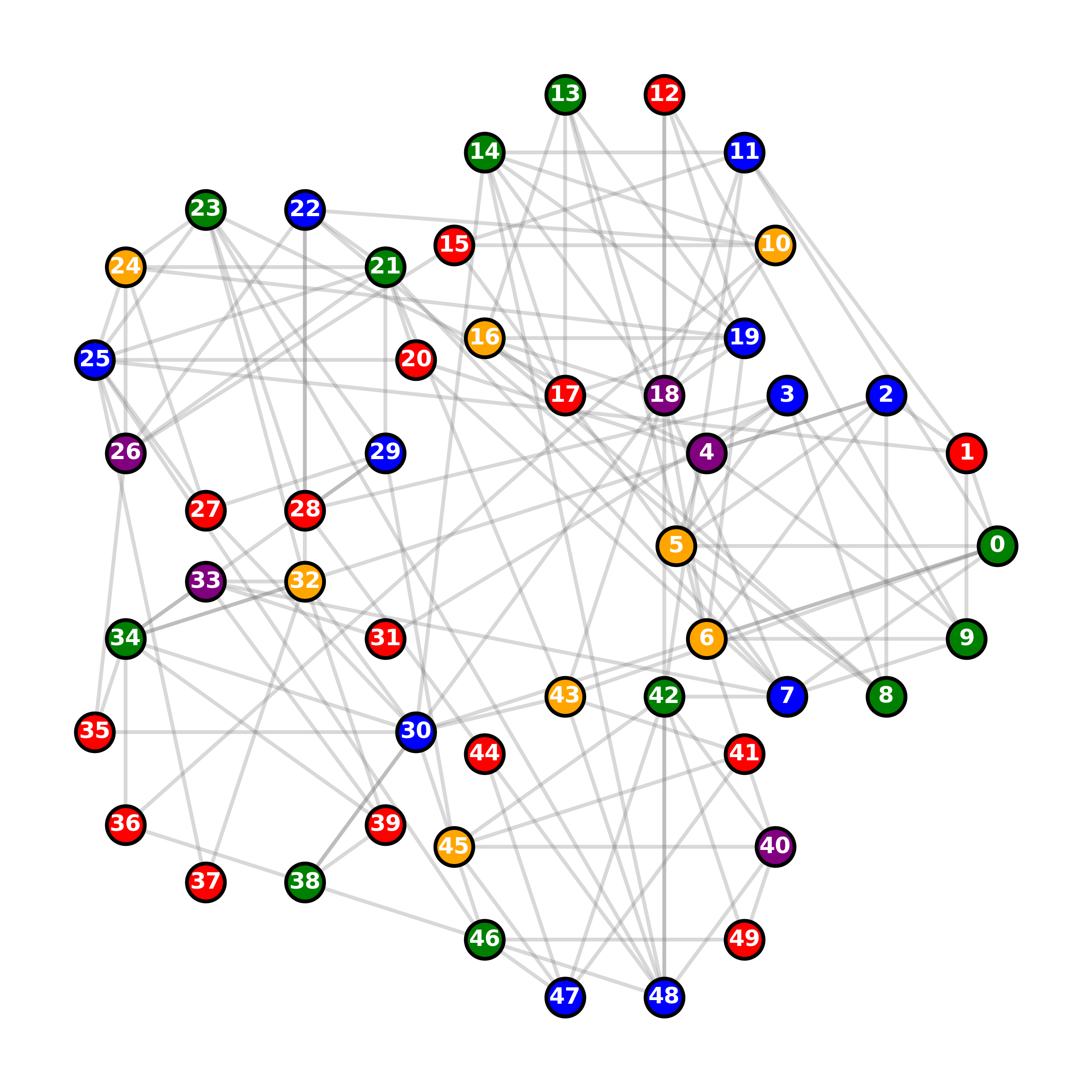}  \label{fig:5c} &
        \includegraphics[width=0.35\textwidth]{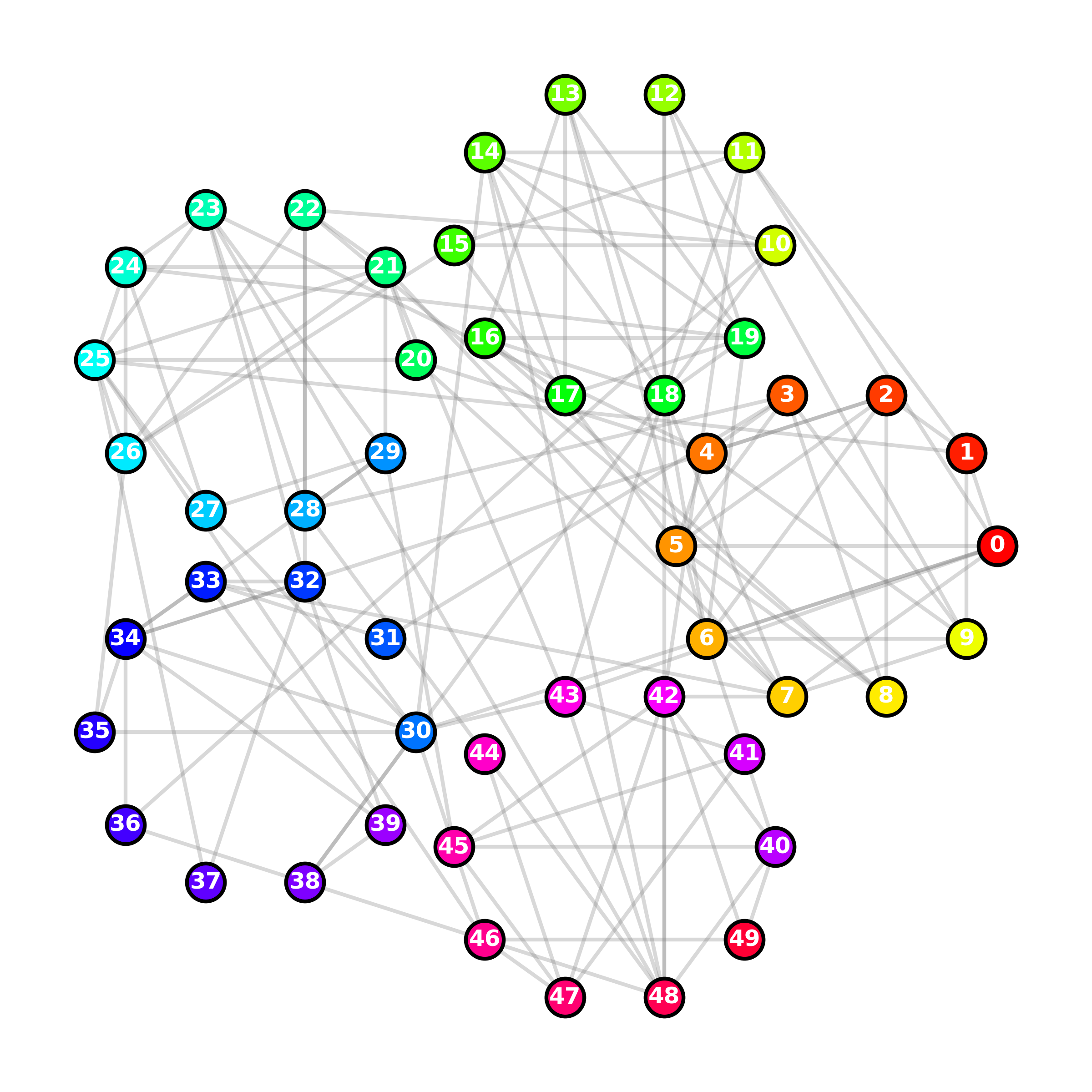}  \label{fig:5d} \\
        (c) Multi Consensus - Heterophily Pattern & (d) Individualized Consensus
    \end{tabular}
    }
    \caption{Visualisation of colour-coded node labels learned by GODNF for different convergence configurations.} 
    \label{fig:consensus_states}
\end{figure}

In the single consensus (Figure 5a), GODNF learns to assign identical labels across all nodes. In the multi-consensus scenario with homophily labels (Figure 5b), our model correctly identifies the pattern with neighbors having similar labels. In contrast, the multi-consensus configuration with heterophily labels (Figure 5c) yields more intricate patterns, with neighboring nodes exhibiting markedly different labels. Finally, the individualized consensus (Figure 5d) shows the highest color diversity, capturing the personalized nature of node consensus. 

\subsubsection{Additional Experiments} An experiment illustrating the convergence behaviour of GODNF variants, validating Theorem \ref{thm:convergence}, is included in the Appendix \ref{sec:visualization}.\section{Conclusion and Future Work}

We proposed a novel neural diffusion framework based on opinion dynamics. Our method generalizes multiple opinion dynamic models, providing an efficient and expressive solution that advances existing GNNs. We provided a comprehensive theoretical analysis of our framework's convergence properties. The experiments on node classification and influence estimation tasks demonstrated the effectiveness and efficiency of our solution. 

In our existing framework, the current feature retention parameter $\alpha$ is designed as a global hyperparameter. This design choice offers parameter efficiency and ensures stable convergence properties. In our future work, we plan to investigate learning this parameter in a node-specific manner from the underlying data distribution. Additionally, we aim to extend our evaluation to more challenging learning settings, such as transfer learning and out-of-distribution generalization.





\bibliography{wsdm25}
\bibliographystyle{ACM-Reference-Format}
\clearpage

\section*{Appendix}

\setcounter{section}{0}
\renewcommand{\thesection}{\Alph{section}}

\section{Proofs}

In this section, we present the proofs for the theorems discussed in the main content. 

\convergence*

\begin{proof}
We work in the Banach space \( (\mathbb{R}^{n \times d}, \| \cdot \|) \). \( W(t) \to W^* \), which implies $M(t) \to M^*$, where $M^*$ is a fixed matrix. Since \( \|M(t)\|_{\mathrm{op}} < 1 \) for all \( t \), we have \( \|M^*\|_{\mathrm{op}} < 1 \). Let \( X^* \) satisfy:
\[
X^* = \alpha X^* + (1-\alpha) \Lambda X(0) + (1-\alpha) M^* X^*
\]

Define \( A(t) = \alpha I + (1-\alpha)M(t) \) and \( A^* = \alpha I + (1-\alpha)M^* \).

For any \( t \), we have:
\[
\|A(t)\|_{\mathrm{op}} \leq \alpha + (1-\alpha)\|M(t)\|_{\mathrm{op}} < \alpha + (1-\alpha) = 1
\]

Let \( \beta = \sup_t \|A(t)\|_{\mathrm{op}} < 1 \). For the error \( E(t) = X(t) - X^* \), we have:
\[
E(t+1) = A(t)X(t) - A^*X^* + (1-\alpha)\Lambda X(0) - (1-\alpha)\Lambda X(0)
\]
\[
E(t+1) = A(t)E(t) + (A(t) - A^*)X^*.
\]

Taking norms:
\[
\|E(t+1)\| \leq \|A(t)\|_{\mathrm{op}}\|E(t)\| + \|A(t) - A^*\|_{\mathrm{op}}\|X^*\|
\]
\[
\|E(t+1)\| \leq \beta\|E(t)\| + (1-\alpha)\|M(t) - M^*\|_{\mathrm{op}}\|X^*\|
\]

Since \( M(t) \to M^* \), for any \( \tau > 0 \), there exists \( T \) such that for all \( t \geq T \), \( \|M(t) - M^*\|_{\mathrm{op}} < \frac{\tau}{(1-\alpha)\|X^*\|} \) (assuming \( X^* \neq 0 \)).

For \( t \geq T \):
\[
\|E(t+1)\| \leq \beta\|E(t)\| + \tau
\]
This is a contraction mapping with a small perturbation, as \( t \to \infty \) and \( \tau \to 0 \). Thus, according to the Banach fixed point theorem \cite{mannan2021study}, \( X(t) \) converges to \( X^* \).
\end{proof}

\singleconsensus*

\begin{proof}
 With these parameter configurations, the update rule of GODNF becomes \( X(t+1) = W^* X(t) \). Given that \( W^* \) is a row-stochastic matrix, this would be equivalent to the FD model. As proven in DeGroot et al. \cite{degroot1974reaching}, FD model converges to single consensus. Therefore, GODNF will exhibit the same behavior.

\end{proof}

\multiconsensus*

\begin{proof}

The fixed point of GODNF can be written as,

$X^* = (I - M^*)^{-1} \Lambda X(0)$. Since   $||M(t)||_{\mathrm{op}} < 1$ for all $t$, $||M^*||_{\mathrm{op}} < 1$
and $(I-M^*)$ invertible. Therefore, the existence of the above fixed point is guaranteed. Let $B^* = (I-M^*)^{-1}$,  and $X^* = B^* \Lambda X(0)$.  Given $M^*$ has block structure,  we can write $B^*$ as follows:

\begin{equation*}
B^* = \begin{bmatrix} 
B_1 & 0 & \cdots & 0 \\
0 & B_2 & \cdots & 0 \\
\vdots & \vdots & \ddots & \vdots \\
0 & 0 & \cdots & B_k
\end{bmatrix}
\end{equation*}

This leads to set of independent components $X_1^* = B_1^*  \Lambda_1 X_1(0)$, $X_2^* = B_2^*\Lambda_2 X_2(0)$, $\dots$ $X_k^* = B_k^*\Lambda_k X_k(0)$.  Each component reaches its own consensus independently, leading to multiple consensuses when initial conditions and parameters differ for at least two components.

\end{proof}

\individualizedconsensus*

\begin{proof}

The fixed point of GODNF is $X^* = (I - M^*)^{-1} \Lambda X(0)$ where $M^* =  (1 - \Lambda) (W^* - \mu L_g)$. Since   $||M(t)||_{\mathrm{op}} < 1$ for all $t$, $||M^*||_{\mathrm{op}} < 1$  and $(I-M^*)$ invertible. This ensures the existence of the above fixed point. When $\lambda_i$ approaches 1 for all nodes, $M \approx 0$, giving:
\[
X^* \approx (I-0)^{-1}\Lambda X(0) = \Lambda X(0)
\]

Since $\Lambda$ has positive diagonal entries close to 1, each node's convergence value closely preserves its initial feature. With $X(0)$ containing distinct initial features, this naturally leads to individualized consensus. 

\end{proof}

\section{Mapping of GODNF Components to Opinion Dynamic Models} 
\label{sec:mapping}

\begin{table}[ht]
\scalebox{0.8}{
\centering
\begin{tabular}{l|c|c|c|c}
\hline
\multirow{2}{*}{Model} & Current & Initial & Dynamic & Structural \\
 & Feature & Feature & Neighborhood & Regularization \\
 & Retention & Attachment & Influence & \\

\hline
FD  & $\times$ & $\times$ & $\times$ & $\times$ \\
\hline
FJ & $\times$ & $\checkmark$ & $\times$ & $\times$ \\
\hline
HK & $\checkmark$ & $\times$ & $\checkmark$ & $\times$ \\
\hline
GODNF$_{Static}$ & $\checkmark$ & $\checkmark$ & $\times$ & $\checkmark$ \\
\hline
GODNF$_{Dynamic}$ & $\checkmark$ & $\checkmark$ & $\checkmark$ & $\checkmark$ \\
\hline
\end{tabular}}
\caption{Comparison of Opinion Dynamics Models and GODNF Components. Note that for FD and FJ models, we consider formulations without self-loops, hence no current feature retention. While these models incorporate neighbourhood influence, they utilise static (time-invariant) connection weights and topologies, thus marked as lacking \emph{Dynamic Neighbourhood Influence}. Further, we consider the formulation where HK includes agent's current opinion in bounded confidence averaging. }
\label{tab:comparison}
\end{table}

In Table \ref{tab:comparison}, we demonstrate the versatility of GODNF by showing how it generalizes various opinion dynamics models, aligning its components with the mechanisms of these models.

Our neural framework offers a unified approach to opinion dynamics. It encapsulates the diverse and nuanced diffusion dynamics inherent in these distinct models, enhancing expressiveness.

\section{Supplementary Experimental Details}
\label{sec:experimental_details}

\subsection{Model Hyperparameters} In our experiments, we employ a Grid search to find optimal hyperparameters of GODNF in the following ranges: number of layers $\in \{2, 3, 4\}$, learning rate $\in \{ 5e-3, 1e-2\}$, weight decay $\in \{5e-4, 1e-1\}$, dropout $\in \{0.4, 0.5\}$, and the hidden dimension size $\in \{32, 64, 256\}$. GODNF model hyperparameter, $\alpha$ is chosen from \{0.1, 0.3, 0.6, 0.9\}. Note that $\Lambda$, and $\mu$ parameters in our model are learned from the ground truth. In node classification, we train models for up to 1000 epochs, while for influence estimation, we use 200 epochs. We utilize the Adam algorithm \cite{kingma2014adam} as the optimizer.

\subsection{Implementation Details}

All experiments were conducted on a Linux server with an Intel Xeon W-2175 2.50 GHz processor, comprising 28 cores, an NVIDIA RTX A6000 GPU, and 512 GB of RAM. We use the following Python libraries for our implementation: PyTorch version 2.3.1, torchvision 0.18.1, torchaudio 2.3.1, torch-geometric 2.7.0, torch-cluster 1.6.3, torch-scatter 2.0.9, and torch\_sparse 0.6.18.

\subsection{Benchmark Dataset Statistics}

Dataset statistics for node classification and node regression tasks are depicted in table \ref{tab:dataset_stats_classification} and table \ref{tab:dataset_stats_regression}, respectively.

\begin{table}[h]
\centering
\scalebox{0.8}{
\begin{tabular}{lllccc}
\hline
\textbf{Type} & \textbf{Dataset} & \textbf{Homophily} & \textbf{\# Nodes} & \textbf{\# Edges} & \textbf{\# Classes} \\
& & \textbf{Level} & & & \\
\hline
Heterophily & Texas & 0.06 & 183 & 309 & 5 \\
Heterophily & Cornell & 0.12 & 183 & 295 & 5 \\
Heterophily & Wisconsin & 0.18 & 251 & 499 & 5 \\
Heterophily & Film & 0.22 & 7,600 & 33,544 & 5 \\
Heterophily & Amazon-rating & 0.38 & 24,492 & 186,100 & 5 \\
\hline
Homophily & Cora Full & 0.57 & 19,793 & 126,842 & 70 \\
Homophily & Citeseer & 0.74 & 3,312 & 4,732 & 6 \\
Homophily & Cora-ML & 0.79 & 2,995 & 16,316 & 7 \\
Homophily & PubMed & 0.80 & 19,717 & 44,338 & 3 \\
Homophily & DBLP & 0.83 & 17,716 & 105,734 & 4 \\
\hline
\end{tabular}
}
\caption{Dataset statistics for node classification}
\label{tab:dataset_stats_classification}
\end{table}

\begin{table}[h]
\centering
\scalebox{0.9}{
\begin{tabular}{lcc}
\hline
\textbf{Dataset} & \textbf{\# Nodes} & \textbf{\# Edges} \\
\hline
Jazz & 198 & 2,742 \\
Cora-ML & 2,810 & 7,981 \\
Network Science & 1,565 & 13,532 \\
Power Grid & 4,941 & 6,594 \\
\hline
\end{tabular}
}
\caption{Dataset statistics for influence estimation task}
\label{tab:dataset_stats_regression}
\end{table}

\subsection{Additional Experiments}

 \subsubsection{Convergence and Visualization Analysis} 
 \label{sec:visualization}

 We conduct experiments to empirically validate the convergence properties of GODNF, performing node classification tasks on the Cora Full dataset. Figure \ref{fig:embedding_change} shows the convergence behaviour of GODNF variants through step-wise average embedding changes over evolution time. The exponential decay pattern demonstrates that both GODNF variants achieve stable convergence, with average embedding changes decreasing rapidly from initial large adjustments to minimal fluctuations by step 7-8, confirming the theoretical convergence guarantees presented in Theorem \ref{thm:convergence}.

 \begin{figure}[h]
    \centering
    \begin{subfigure}{0.48\linewidth}
        \centering
        \includegraphics[width=\linewidth]{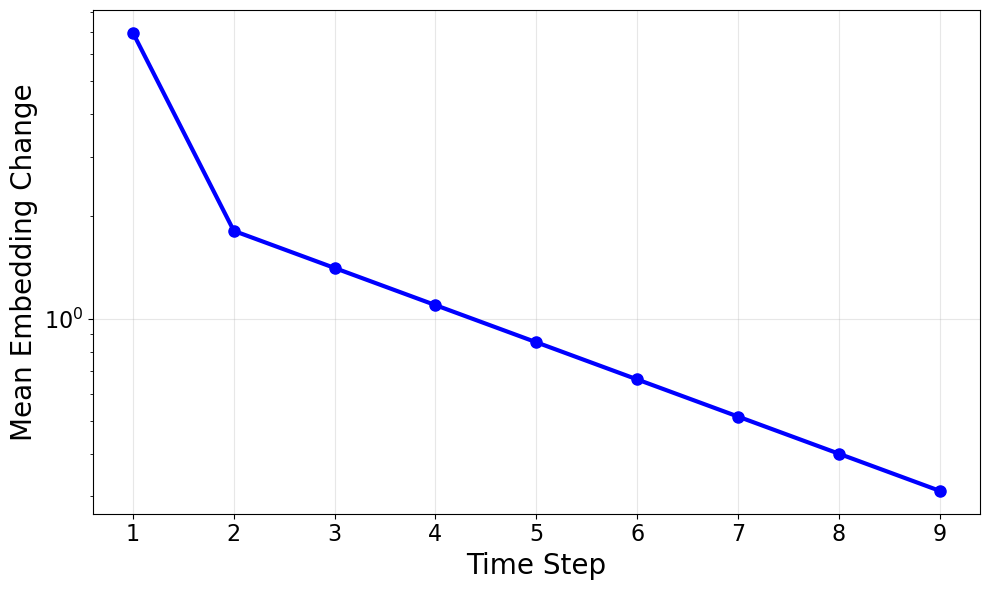}
        \caption{GODNF$_\text{Static}$}
    \end{subfigure}
    \hfill
    \begin{subfigure}{0.48\linewidth}
        \centering
        \includegraphics[width=\linewidth]{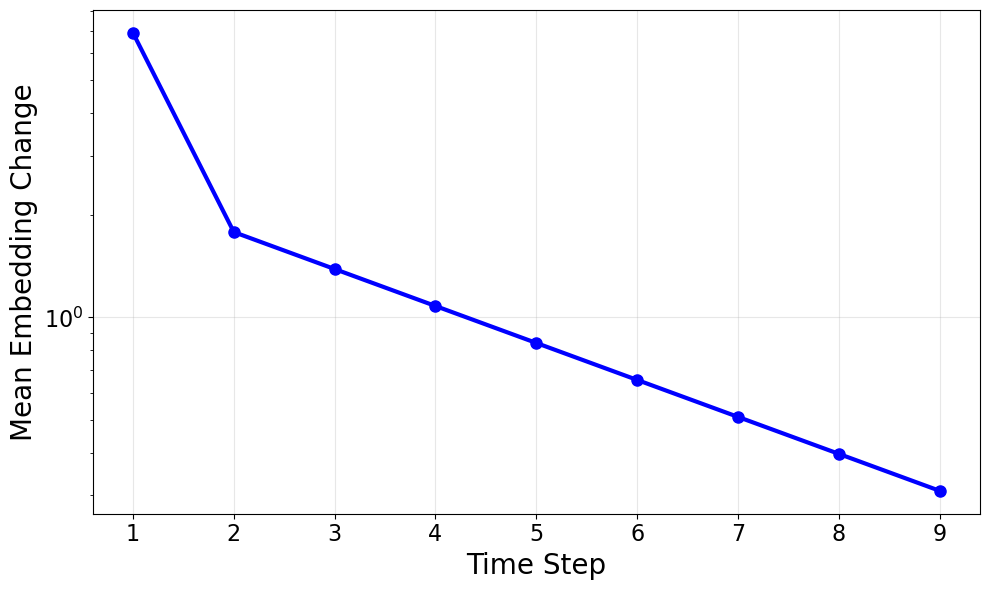}
       \caption{GODNF$_\text{Dynamic}$}
    \end{subfigure}
    \caption{Step-wise embedding changes of GODNF over time steps for the Cora Full for node classification.}
    \label{fig:embedding_change}
\end{figure}

 \begin{figure*}
    \centering
    \scalebox{0.92}{ 
        \begin{minipage}{\linewidth}
            \centering
            \begin{subfigure}[b]{\linewidth}
                \centering
                \includegraphics[width=\linewidth]{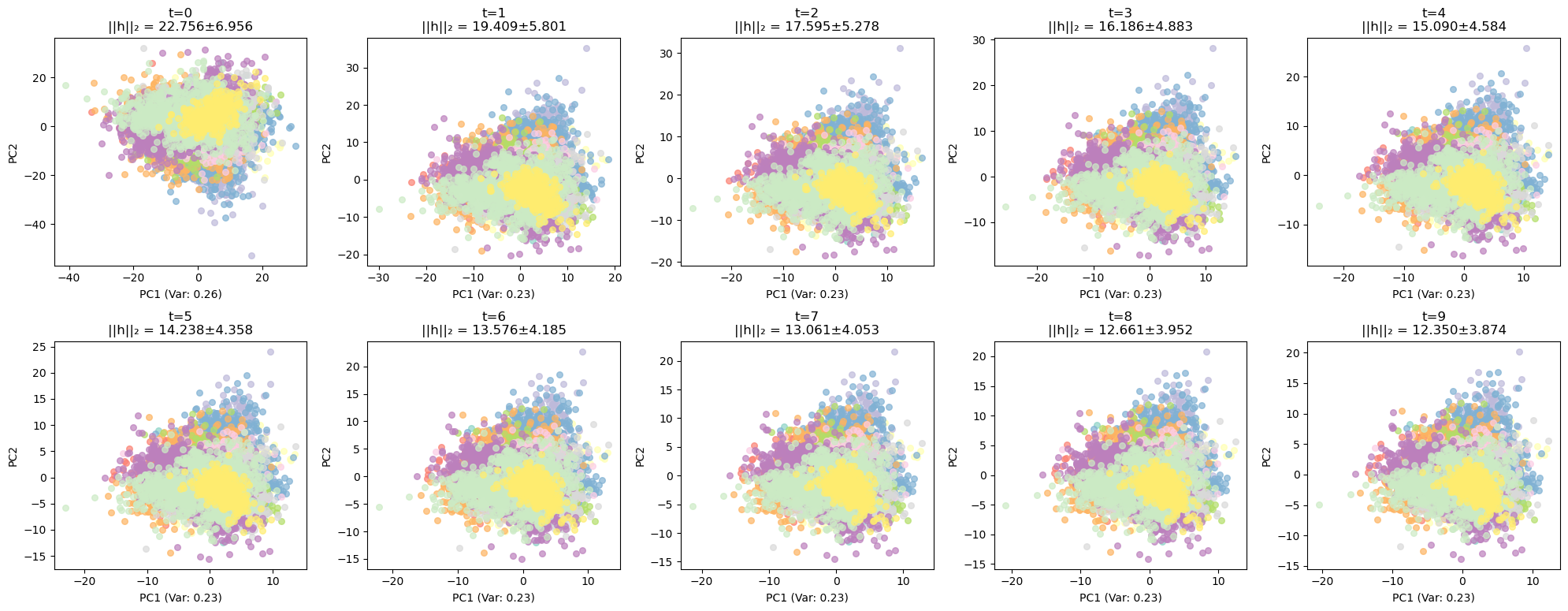}
                \caption{GODNF$_\text{Static}$}
            \end{subfigure}
            \vspace{0.5cm}
            \begin{subfigure}[b]{\linewidth}
                \centering
                \includegraphics[width=\linewidth]{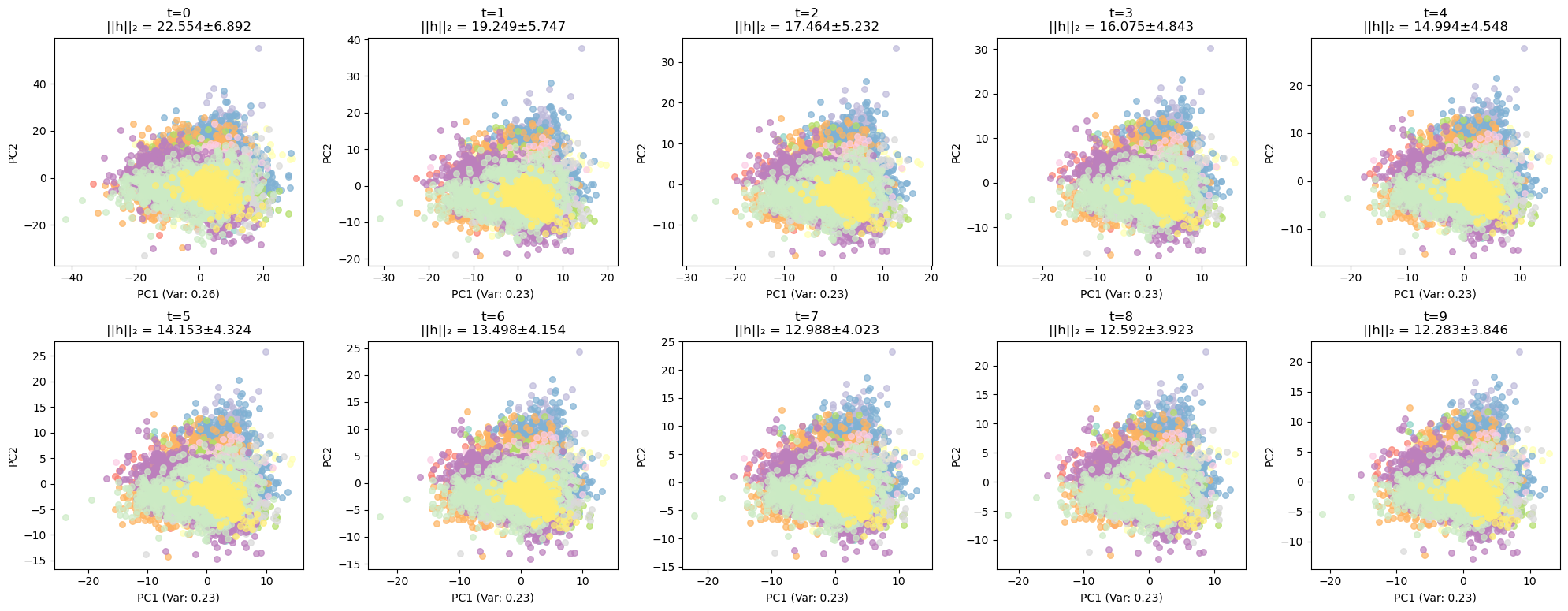}
                \caption{GODNF$_\text{Dynamic}$}
            \end{subfigure}
        \end{minipage}
    }
    \caption{Embedding evolution for Cora Full dataset over multiple time steps. $\|h\|_2$ represents the average L2 norm across all individual node embeddings, indicating the mean embedding magnitude. PCA (Var) shows the percentage of original high-dimensional embedding variance captured in the 2D visualization.}
    \label{fig:embedding_convergence}
\end{figure*}

   Next, we provide a visualisation of embedding convergence. Figure \ref{fig:embedding_convergence} shows the embedding evolution of GODNF variants in 2D principal component analysis (PCA) space across time steps. The embedding evolution shows that GODNF successfully learns better node representations over time. Initially, nodes are scattered randomly in the embedding space, but as iterations progress, nodes of the same label gradually move closer together to form distinct clusters. The decreasing embedding norm values (i.e., $||h||_2$) indicate that embeddings become more stable. At the same time, the improved visual separation between different colored clusters demonstrates that the model learns to distinguish between different node classes effectively. The static variant of GODNF exhibits a slightly smoother progression of embeddings and more consistent cluster boundaries. In contrast, the dynamic variant adjusts weights for each time step, resulting in slightly higher initial variability, but it achieves comparable final convergence quality.

   \subsubsection{Robustness to adversarial attacks} \label{sec:adversarial_attack} Figure \ref{fig:attack_benchmark_film} compares the node classification performance of GODNF against the baselines of the Film dataset under various adversarial attacks. This is in addition to the Citeseer dataset provided in the main content. Consistent with the Citeseer results, GODNF variants also outperform all baselines on the Film dataset under various adversarial attack scenarios.

\begin{figure*}
    \centering
    \includegraphics[width=\linewidth]{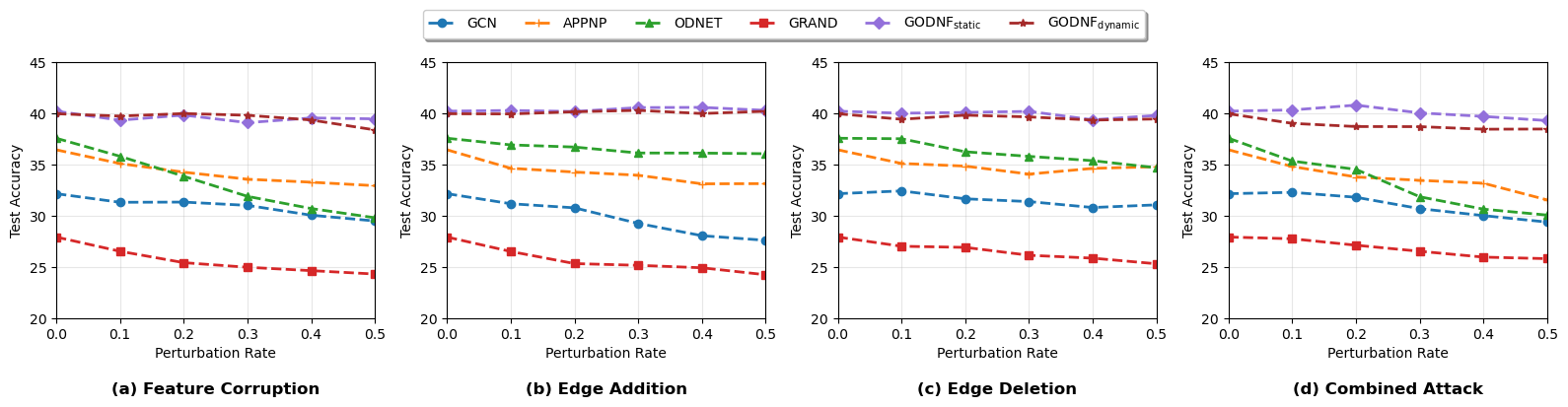}
    \caption{Node classification performance for Film dataset under different adversarial attacks.}
    \label{fig:attack_benchmark_film}
\end{figure*}

\end{document}